\documentclass[10pt]{article} 

\usepackage[preprint]{rlj} 

\usepackage{amssymb} 
\usepackage{mathtools} 
\usepackage{mathrsfs} 
\usepackage{graphicx} 
\usepackage{subcaption} 
\usepackage[space]{grffile} 
\usepackage{url} 
\usepackage{lipsum} 
\usepackage{amsthm}
\usepackage{booktabs}
\usepackage{algorithm}
\usepackage{arydshln}
\usepackage{algpseudocode}
\usepackage{setspace}
\usepackage{multirow}

\theoremstyle{plain}
\newtheorem{theorem}{Theorem}[section]

\newtheorem{lemma}[theorem]{Lemma}
\newtheorem{corollary}[theorem]{Corollary}
\theoremstyle{definition}
\newtheorem{definition}[theorem]{Definition}

\theoremstyle{remark}

\newcommand{\pluseq}{\mathrel{{+}{=}}}

\title{Approximate Next Policy Sampling: Replacing Conservative Target Policy Updates in Deep RL}
\setrunningtitle{Next Policy Sampling}

\author{Dillon Sandhu\textsuperscript{1}, Ronald Parr \textsuperscript{1}}

\emails{dillon.sandhu@duke.edu}

\affiliations{ $^{1}$\textbf{Department of Computer Science, Duke University, USA}\\
\par 
}
\contribution{Proposes Approximate Next Policy Sampling: an alternative to conservative policy updates that explicitly alignins the training distribution with the next policy's state visitation distribution -- rather than constraining the policy update.}
{ As implemented, ANPS allows for unconstrained target policy updates, but must constrain the behavioral policy updates. }

\contribution{ A general bound (Theorem \ref{thm:policy-improvement}) that isolates the importance of the next policy's distribution. This shows why conservative updates are only one possible solution to the distribution mismatch problem raised by \cite{kakade_and_langford:CPI}.}
{ This bound follows straightforwardly from the Performance Difference Lemma \protect\cite{kakade_and_langford:CPI}. }
\contribution{ A bound (Theorem \ref{thm:sv_improvement}) proving that our algorithm guarantees policy improvement by controlling training error and behavioral divergence. }
{ Like standard related bounds (\protect\cite{kakade_and_langford:CPI}, \protect\cite{pmlr-v37-schulman15}), it assumes bounded value error, which is not guaranteed for deep RL. }
\contribution{
A practical wrapper (SV-RL) applicable to standard online RL algorithms (e.g., PPO) that implements ANPS by decoupling the behavioral and target policies. We provide empirical evidence that this allows algorithms to hold the target policy fixed, gather relevant data, and safely execute larger policy jumps. }
{
SV-RL is an initial implementation of ANPS. It relies on a proxy metric (stability of the value estimates) to determine when the value function is accurate enough to update the target policy. Furthermore, it requires off-policy evaluation since it introduces a separate behavioral policy. }

\keywords{Foundations, Policy Improvement} 

\summary{ Conservative policy updates (small changes in policy space) are an integral part of most modern RL algorithms. The origins trace back to a classic "chicken-and-egg" problem in policy improvement: to safely improve a policy, the value function must be accurate on the state-visitation distribution \textit{of the updated policy}, which is unknown during training. To deal with this, conservative methods constrain the policy update -- aiming to ensure this updated state visitation distribution stays similar to the training data.

This paper explores an alternative, which we call \textit{Approximate Next Policy Sampling} (ANPS): shift the training data to be similar to the future policy's state distribution. ANPS aims to improve the value function estimate at the states that are most important to the policy update. In contrast, conservative updates constrain the policy to remain where the value function estimate is already assumed to be trustworthy. We present \textit{Stable Value Approximate Policy Iteration} (SV-API), a lightweight modification to standard approximate policy iteration algorithms that implements ANPS. SV-API holds the target policy fixed while an iteratively updated behavioral policy gathers relevant experience, only committing to a policy update once the value estimates have stabilized. SV-API matches or exceeds the performance of existing methods on high-dimensional discrete (Atari) and continuous control, while making substantially larger target policy updates. This demonstrates the viability of ANPS as an alternative approach to a classic challenge in RL.}
\begin{document}
	\makeCover 
	\maketitle 

    \begin{abstract}
        We revisit a classic "chicken-and-egg" problem in reinforcement learning: to safely improve a policy, the value function must be accurate on the state-visitation distribution \textit{of the updated policy}. That distribution over states is unknown and cannot be sampled for the purposes of training the value function. Conservative updates solve this problem, but at the cost of shrinking the policy update. This paper explores an alternative solution, Approximate Next Policy Sampling (ANPS), which addresses the problem by modifying the training distribution rather than constraining the policy update. ANPS is satisfied if the distribution of the training data approximates that of the next policy. To demonstrate the feasibility and efficacy of ANPS, we introduce Stable Value Approximate Policy Iteration (SV-API). SV-API modifies the standard approximate policy iteration loop to hold the target policy fixed while an iteratively updated behavioral policy gathers relevant experience. It only commits to a new policy once a convergence criterion has been met. If certain stability criteria are met, the update is guaranteed to be safe; otherwise, it remains no less safe than standard approximate policy iteration. Applying SV-API to PPO yields Stable Value PPO (SV-PPO), which matches or improves performance on high-dimensional discrete (Atari) and continuous control benchmarks while executing substantially larger target policy updates. These results demonstrate the viability of ANPS as a new solution to this classic challenge in RL.
    \end{abstract}

\section{Introduction}
	\label{sec:introduction} Policy Iteration \citep{howard:dp} has long been the ``big hammer'' of exact MDP
	algorithms, often solving MDPs which can fit in memory in a shockingly small number of iterations. Combining
	policy iteration with value function approximation, however, undermines the fundamental reason policy iteration
	works: monotonic policy improvement. Various mechanisms for enforcing small policy changes from one iteration to the next have become widely adopted means of mitigating this problem. This paper proposes a novel alternative which, even in its simplest realizations, shows promise.
    
    Value function approximation errors can cause the next policy to be
	worse than the current one \citep{bertsekas1996neuro}. This issue can arise even with a ``well behaved''
	function approximator that minimizes error on its training set -- indeed this behavior is part of the
	problem. States that are rarely visited under the current policy can have high approximation error,
	which can lead to detrimental changes to the policy at these states. The new policy may visit such states
	frequently due to policy changes in other states. The effect is worse performance. Ideally, and
	somewhat counterintuitively, the best distribution for training the current Q-values is the distribution
	of the {\em next} policy, but that typically isn't available when the Q-functions for the current
	policy are learned, creating a chicken-and-egg problem.

	\cite{kakade_and_langford:CPI} introduced {\em conservative policy iteration} as an approach to manage this
	problem. They proved that by limiting the policy change, the distributions for the current and next policy
	would be close, thereby minimizing the problem. Follow up work TRPO \citep{pmlr-v37-schulman15} adapted
	the conservative policy update for actor-critic algorithms, with \cite{kakade_and_langford:CPI}'s result
	playing a central role in the TRPO update derivation. PPO is designed to emulate TRPO using 
    clipping an efficient heuristic \citep{schulman2017proximalpolicyoptimizationalgorithms}. 
    Other modern RL algorithms also regularize changes to the policy to ensure it stays in the region where the Q-values assumed to be more
	trustworthy. (\cite{mpo}, \cite{museli}).

	This paper explores an alternative approach to the distribution mismatch problem: Rather than restricting
	the policy update, our approach actively shifts the data collection to match the next policy. We call it \textit{Approximate
	Next Policy Sampling} (ANPS). Since the value function is trained to minimize error on sampled states,
	ANPS leads to a more trustworthy value estimate (approximately) the states the next policy will visit. If ANPS successfully samples states that the true next policy visits, and the value learning method is able to lower error on these sampled states, then it resolves the chicken-and-egg problem. We propose and analyze an iterative approach to aligning the distributions that can be implemented as a modest change to existing algorithms. We show that idea works as a viable alternative to conservative policy updates - even exceeding the performance of conservative policy updates in some cases.


    \vspace{-1em}
	\subsection{Related Work}
	Standard Approximate Policy Iteration (API) \citep{bertsekas1996neuro} computes the greedy
	policy relative to a $Q$-function estimate for a fixed policy $\pi_k$. Unfortunately, policy improvement cannot be
	guaranteed at each step, meaning the approximately greedy policy $\pi_{k+1}$ may
	perform worse than $\pi_{k}$. As such, the algorithm may not converge to a locally optimum policy. Since the greedy policy update is discontinuous,
	it can lead to a large shift in the state-distribution \citep{perkins_2002}.

	Softer policy update steps are the most widely used solution to this problem. A stochastic policy is
	used, and changed smoothly. For example, policy gradient methods with function approximation \citep{Sutton2000}
	avoid the distribution mismatch problem by using an infinitesimal step in policy space. When the policy change
	is infinitesimal, the change in performance does not depend on the new state visitation distribution, only
	that of the present policy.

	Conservative policy updates (\cite{kakade_and_langford:CPI}, \cite{pmlr-v37-schulman15}) allow for
	discrete, non-infinitesimal jumps. The primary reason for conservative policy updates is mistrust in the value function at the state distribution associated with $\pi_{k+1}$. 

	The Approximate Next Policy Sampling (ANPS) scheme we introduce does not attempt to ensure $\pi'$ has a similar distribution to $\pi$. 
    In contrast, ANPS decouples the target and behavioral policies. 
    Given a prospective next policy $\tilde{\pi}$, ANPS samples it directly in order to estimate performance on $d^{\tilde{\pi}}$.
	This pushes the ``small policy change'' requirement from the target policy to the sampling policy, allowing for larger jumps in policy space, reminiscent of the large jumps of classical Policy Iteration.
    \vspace{-1em}
	\section{Preliminaries}
    
	\label{sec:preliminaries} We consider a Markov Decision Process (MDP) defined by the tuple $(\mathcal{S}
	, \mathcal{A}, P, r, \gamma , s_{0})$, with state space $\mathcal{S}$, action space $\mathcal{A}$,
	transition dynamics $P: \mathcal{S}\times \mathcal{A}\to \Delta(\mathcal{S})$, reward function $r: \mathcal{S}
	\times \mathcal{A}\to [0,1]$, discount factor $\gamma \in [0, 1)$, and fixed initial state $s_{0}$.

	A stationary policy $\pi: S \to \Delta_{A}$ specifies a conditional distribution over actions. We
	overload notation and sometimes use $\pi(s)$ as shorthand for $\pi(\cdot|s)$. The problem is to find a policy
	that maximizes $V^{\pi}(s_{0}) \doteq \mathbb{E}_{\pi}(\sum_{t=0}^{\infty}\gamma^{t}r(s_{t},a_{t})) \ | 
	s_{0})$, where the subscript $\pi$ indicates the expectation is over probabilities that are jointly
	governed by $\pi$ and $P$. The \textbf{action-value function}, $Q^{\pi}$, and \textbf{advantage
	function} are related to the value function as follows, and we refer to all of them as value functions:
	\begin{align}
		Q^{\pi}(s,a) & \doteq r(s,a) + \gamma \mathbb{E}_{s' \sim P(s,a)}\mathbb{E}_{a' \sim \pi(s')}Q^{\pi}(s',a') \label{eq:Q_def} \\
		V^{\pi}(s)   & = \mathbb{E}_{a \sim \pi(s)}Q^{\pi}(s,a)                                                                      \\
		A^{\pi}(s,a) & \doteq Q^{\pi}(s,a) - V^{\pi}(s).\label{eq:A_def}
	\end{align}
	Note that the value function is bounded:
	$\max_{\pi,s,a}Q^{\pi}(s,a) \leq \sum_{i=0}^{\infty}\gamma^{i}= \frac{1}{1-\gamma}$.

	A convenient way to express a policy's performance is in terms of its discounted \textbf{state action
	visitation distribution}. This quantity discounts states based on how far they are in the future, and is
	normalized by $(1-\gamma)$ to ensure it sums to one.
	\begin{align*}
		d^{\pi}(s,a) & \doteq (1-\gamma)\sum_{t=0}^{\infty}\gamma^{t}P(s_{t}= s, a_{t}=a)
	\end{align*}
	The Performance Difference Lemma (PDL; \citet{kakade_and_langford:CPI}) decomposes the improvement of
	arbitrary policy $\pi'$ over arbitrary policy $\pi$ into $A^{\pi}$ and $d^{\pi'}$:
	\begin{align}
		V^{\pi'}(s_{0})-V^{\pi}(s_{0}) = \frac{1}{1-\gamma}\mathbb{E}_{s,a \sim d^{{\pi'}}}\ A^{\pi}(s,a). \tag{PDL} \label{eq:pdl}
	\end{align}

	\section{Approximate Policy Iteration (API)}
	We now describe the distribution shift problem for API, deriving theoretical bounds that emphasize the importance of aligning the training distribution and next-policy's distribution, a concept we term \textit{Next Policy Alignment}. 
	
    Algorithm \eqref{alg:generic_api} presents an on-policy version of API, meaning that it collects data using $\pi_{k}$.  API constructs a sequence of policies $\{\pi_{k}\}_{k=0}^{\infty}$ by iterating between \textit{Data Collection}, \textit{Policy Evaluation} and \textit{Policy Improvement}. 
	In each round $k$, the policy evaluation operator (\texttt{Eval}) estimates  $q_k^{\pi_k}\approx Q^{\pi}$ from dataset $\mathcal{D}_k$.
	Then, the policy improvement operator ($\Gamma$) returns new policy as a function of $q^{\pi_k}_k$, with optional dependence on the present policy $\pi_k$ or data batch $\mathcal{D}_k$. 

	\begin{figure}[ht]
		\centering
		\fbox{%
		\begin{minipage}[t]{0.45\textwidth}
			\begin{algorithm}
				[H]
				\caption{On-Policy API}
				\label{alg:generic_api}
				\begin{algorithmic}
					[1]
                    \setstretch{1.05}
                    \Require API Subroutines $\texttt{Eval}$ and $\Gamma$ \State Initialize $\pi_{0}$ \For{$k = 0, 1, 2, \dots$}
					\State Collect dataset $\mathcal{D}_{k}$ by rolling out $\pi_{k}$. \State $q_{k}^{\pi_k}\leftarrow
					\texttt{Eval}(\pi_{k}, \mathcal{D}_{k})$ \State $\pi_{k+1}\leftarrow \Gamma(q_{k}^{\pi_k}, \pi_{k}, \mathcal{D}_k)$ \EndFor
				\end{algorithmic}
			\end{algorithm}
		\end{minipage}%
		} \hfill \fbox{%
		\begin{minipage}[t]{0.49\textwidth}
			\begin{algorithm}
				[H]
				\caption{Stable Value API (Abstract)}
				\label{alg:sv_api}
				\begin{algorithmic}
					[1] 
                    \setstretch{1.05}
                    \Require \texttt{Eval}, $\Gamma$, and Stability Criterion $\mathcal{C}$
					\State Initialize $\pi_{0}$, $\beta_{0}\gets \pi_{0}$ 
					\For{$k = 0, 1, 2, \dots$} 
						\State Collect dataset $\mathcal{D}_{k}$ by rolling
					out $\beta_{k}$ 
						\State $q^{\pi_k}_{k}\leftarrow \texttt{Eval}(\pi_{k}, \mathcal{D}_{k})$ 
						\State $\beta_{k+1}\leftarrow \Gamma(q^{\pi_k}, \beta_{k}, \mathcal{D}_k)$ 
						\If{$\mathcal{C}(q^{\pi_k}_{k}, q^{\pi_{k-1}}_{k-1}, \mathcal{D}_k)$ is \textbf{True}} 
						\State $\pi_{k+1}\gets \beta_{k+1}$ \Comment{Update target policy}
					\Else
						\State $\pi_{k+1}\gets \pi_{k}$ \Comment{Hold target policy}
					\EndIf
					\EndFor
				\end{algorithmic}
			\end{algorithm}
		\end{minipage}
		}
		\caption{The Stable Value modification to API. SV-API only updates the target policy $\pi$ when the convergence criterion $\mathcal{C}$ is met.}
		\label{fig:alg_comparison}
	\end{figure}

    The importance of the \textbf{next-policy state-action distribution} for policy improvement is well known in the literature on API. We provide an analysis that emphasizes this dependence in each round.
    To quantify the performance of \texttt{Eval} under the training distribution $\mu$, we define the following error metric, which is similar to the Mean Squared Value Error (\cite{sutton2018introduction}).

	\begin{definition}[Weighted Action-Value Error]
        \label{def:q_abs_error}
		The $\mu$-weighted estimation error is
		\begin{align}
			\varepsilon(\mu, q^{\pi}) \doteq \sum_{s\in S}\sum_{a\in A}\mu(s,a) \left| Q^{\pi}(s,a) - q^{\pi}(s,a) \right|. \label{eq:q_abs_error}
		\end{align}
	\end{definition}

	Next, we introduce a quantity that typical policy improvement operators maximize.

	\begin{definition}[Estimated Expected Advantage]
    \label{def:estimated_expected_adv}
		Given critic $q^{\pi}$, policy $\pi$, and arbitrary policy $\pi'$, the \textbf{estimated expected advantage}
		of $\pi'$ over $\pi$ is
		\begin{align}
        \mathbb{A}^{\pi}_{\pi'}\doteq \mathbb{E}_{s,a \sim d^{\pi'}}\left[q^{\pi}(s,a) - V^{\pi}(s)\right].\label{eq:estimated_expected_adv}
		\end{align}
    
	\end{definition}
	Most policy improvement operators approximately or exactly maximize this quantity by selecting actions of (perceived) high value at each state. 
	For example, the greedy policy update exactly maximizes $\mathbb{A}^{\pi}_{\pi'}$. 
	Actor-critic methods maximize a surrogate, which can be estimated on-policy (\cite{pmlr-v37-schulman15}).
    
    
    The following theorem bounds the policy improvement using the quantities in Definition \ref{def:q_abs_error} and Definition \ref{def:estimated_expected_adv}. 
    
	\begin{theorem}[Value Error and Policy Improvement]
		\label{thm:policy-improvement} Given action-value estimate $q^{\pi}$ and target policy $\pi$, let the next policy be $\pi' = \Gamma(q,\pi)$. Then,
		\begin{align}
			\frac{1}{1-\gamma}\left(\mathbb{A}^{\pi}_{\pi'}- \varepsilon(d^{\pi'},q^{\pi}) \right) \leq V^{\pi'}(s_{0}) - V^{\pi}(s_{0}) & \leq \frac{1}{1-\gamma}\left(\mathbb{A}^{\pi}_{\pi'}+ \varepsilon(d^{\pi'},q^{\pi}) \right) \label{eq:thm_1}
		\end{align}
	\end{theorem}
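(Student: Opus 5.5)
The plan is to start from the Performance Difference Lemma \eqref{eq:pdl} and split the true advantage into the estimated advantage plus an estimation-error term. For every $(s,a)$ we use
\begin{align*}
A^{\pi}(s,a) = Q^{\pi}(s,a) - V^{\pi}(s) = \bigl(q^{\pi}(s,a) - V^{\pi}(s)\bigr) + \bigl(Q^{\pi}(s,a) - q^{\pi}(s,a)\bigr).
\end{align*}
We take expectations of both sides under $d^{\pi'}$. Everything is bounded: the rewards lie in $[0,1]$, we assume $q^{\pi}$ is finite, and in the general case the expectation is a finite sum or an integrable one. So linearity of expectation gives
\begin{align*}
V^{\pi'}(s_0) - V^{\pi}(s_0) = \frac{1}{1-\gamma}\Bigl(\mathbb{A}^{\pi}_{\pi'} + \mathbb{E}_{s,a\sim d^{\pi'}}\bigl[Q^{\pi}(s,a) - q^{\pi}(s,a)\bigr]\Bigr).
\end{align*}
Here the first expectation is exactly the estimated expected advantage of Definition \ref{def:estimated_expected_adv}.

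Next we bound the residual term. Since $|x| \le y$ implies $-y \le x \le y$, Jensen's (or the triangle) inequality gives
\begin{align*}
\Bigl|\mathbb{E}_{s,a\sim d^{\pi'}}\bigl[Q^{\pi}(s,a) - q^{\pi}(s,a)\bigr]\Bigr| \le \sum_{s,a} d^{\pi'}(s,a)\,\bigl|Q^{\pi}(s,a)-q^{\pi}(s,a)\bigr| = \varepsilon(d^{\pi'},q^{\pi}).
\end{align*}
The right-hand side is Definition \ref{def:q_abs_error} evaluated with $\mu = d^{\pi'}$. Substituting this two-sided bound into the identity above, and multiplying by the positive constant $\frac{1}{1-\gamma}$, gives both inequalities in \eqref{eq:thm_1} at once.

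No step is a real obstacle; the argument is bookkeeping. The one point to check is that the state-action measure used in \eqref{eq:pdl} is the same joint distribution $d^{\pi'}(s,a)$ used in the two definitions, with actions drawn from $\pi'$ rather than $\pi$. Given that, the decomposition is valid pointwise and the expectations match term by term. The result does not depend on the particular form of $\Gamma$: it holds for any $\pi'$, so the statement $\pi'=\Gamma(q,\pi)$ only fixes which policy we are talking about.
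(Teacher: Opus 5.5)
Your proposal is correct and follows the paper's proof essentially step for step. Like the paper, you add and subtract $q^{\pi}$ inside the \eqref{eq:pdl} expectation, identify $\mathbb{A}^{\pi}_{\pi'}$, bound the residual $\bigl|\mathbb{E}_{s,a\sim d^{\pi'}}[Q^{\pi}(s,a)-q^{\pi}(s,a)]\bigr|$ by $\varepsilon(d^{\pi'},q^{\pi})$, and then unwrap the absolute value and divide by $1-\gamma$.
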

    The proof is in the supplemental materials \ref{sec:proof_thm}.
    Compared to previous results, theorem \ref{thm:policy-improvement} includes the function approximation error of the Q-estimate that is used to improve the policy, making the relationship between this error and performance degradation more salient.
    If $q^\pi$ is completely accurate on $d^{\pi_{k+1}}$, the greedy policy will maximize policy improvement. 
    On the other hand, even if $q^\pi$ is 
    perfectly accurate on-policy, inaccuracy on $d^{\pi_{k+1}}$ means that the greedy update may fail to improve the policy.


    In general, we assume that \texttt{Eval} minimizes $\varepsilon(\mu, q^{\pi})$, by focusing value function capacity on the training data. Thus, $\varepsilon(d^{\pi'}, q^{\pi})$ must be controlled some other way. The following results shows that when the training distribution matches the next round's on-policy distribution, then $\varepsilon(d^{\pi'}, q^{\pi})$ can be controlled indirectly.

	\begin{definition}[\textbf{$\delta$-Next Policy Alignment (NPA)}]
		\label{def:nps} Given next policy $\pi'$, a training distribution $\mu(s,a)$ achieves \textbf{$\delta$-Next
		Policy Alignment} if
		\begin{math}
			d_{TV}(d^{\pi'}, \mu) \leq \delta_{dist} \label{def:npa}
		\end{math}.
	\end{definition}

	\begin{lemma}
		\label{lemma:tv_dist} Assume bounded value error: $\max_{s,a,\pi}|Q^{\pi}(s,a) - q^{\pi}(s,a)| \leq \frac{1}{1-\gamma}$. For any two state-action distributions $\mu$ and $d^{\pi'}$:
		\[
			\varepsilon(d^{\pi'},q^{\pi}) \leq \varepsilon(\mu, q^{\pi}) + \frac{2}{1-\gamma}d_{TV}(d^{\pi'}, \mu
			)
		\]
	\end{lemma}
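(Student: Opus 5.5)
The plan is a direct change-of-measure argument. Write $f(s,a) \doteq |Q^{\pi}(s,a) - q^{\pi}(s,a)|$, so that by Definition~\ref{def:q_abs_error} we have $\varepsilon(\nu, q^{\pi}) = \sum_{s,a}\nu(s,a) f(s,a)$ for any state-action distribution $\nu$. Then
\begin{align*}
\varepsilon(d^{\pi'},q^{\pi}) - \varepsilon(\mu,q^{\pi}) = \sum_{s,a}\bigl(d^{\pi'}(s,a)-\mu(s,a)\bigr) f(s,a),
\end{align*}
and the whole proof reduces to bounding this difference of expectations of a single bounded function under two distributions.

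The key step is to use only the bounded value error assumption, $0 \le f(s,a) \le \frac{1}{1-\gamma}$. The crude route applies H\"older's inequality:
\begin{align*}
\Bigl|\sum_{s,a}(d^{\pi'}-\mu) f\Bigr| \le \|f\|_\infty \,\|d^{\pi'}-\mu\|_1 \le \frac{1}{1-\gamma}\cdot 2\, d_{TV}(d^{\pi'},\mu).
\end{align*}
This uses the convention $d_{TV}(p,q) = \frac12\|p-q\|_1$, and it yields exactly the stated constant $\frac{2}{1-\gamma}$. Rearranging gives the lemma.

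The main point to check is the total variation convention. If $d_{TV}$ were defined as $\|\cdot\|_1$ without the factor $\frac12$, the same H\"older step would give $\frac{1}{1-\gamma}d_{TV}$, which is still bounded by the stated quantity. So the claim holds under either convention.

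A sharper bound, which I would mention but not need, comes from the positive/negative decomposition. Since $f \ge 0$ and $\sum_{s,a}(d^{\pi'}-\mu) = 0$, one may replace $f$ by $f - \frac{1}{2(1-\gamma)}$, whose sup-norm is at most $\frac{1}{2(1-\gamma)}$. This gives $\frac{1}{1-\gamma}d_{TV}$ under the $\frac12\|\cdot\|_1$ convention. The lemma's constant is therefore loose, and no step is a real obstacle; the proof is routine once $f$ is identified.
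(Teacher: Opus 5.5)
Your proposal is correct and follows the paper's proof essentially step for step. The paper also writes $\varepsilon(d^{\pi'},q^{\pi})-\varepsilon(\mu,q^{\pi})=\sum_{s,a}(d^{\pi'}-\mu)\,|Q^{\pi}-q^{\pi}|$, bounds it by $\max_{s,a,\pi}|Q^{\pi}-q^{\pi}|\cdot\sum_{s,a}|d^{\pi'}-\mu|\le\frac{1}{1-\gamma}\cdot 2\,d_{TV}(d^{\pi'},\mu)$, and rearranges; your remarks on the TV convention and on sharpening the constant to $\frac{1}{1-\gamma}$ by centering $f$ are also correct, though neither is needed for the stated lemma.
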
 The proof is in Appendix \ref{sec:proof_lemma_tv_dist}. It relies on a basic distribution shift argument relating the expectation over $d^{\pi'}$ present on the LHS to the expectation over $\mu$. 

	\begin{corollary}
    \label{cor:npa_policy_imp}
		Assume the conditions of Theorem \ref{thm:policy-improvement} and Lemma \ref{lemma:tv_dist} hold, and that training distribution $\mu_k$ achieves $\delta$-NPA (Definition \ref{def:npa}). Then, the policy improvement is at least:
		\begin{align}
			V^{\pi'}(s_{0})-V^{\pi}(s_{0}) & \geq \frac{1}{1-\gamma}\mathbb{A}_{\pi'}^\pi - \frac{1}{1-\gamma}\varepsilon(\mu_k, q^\pi) - \frac{2}{(1-\gamma)^2}\delta_{dist} 
		\end{align}
	\end{corollary}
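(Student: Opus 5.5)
The corollary follows by chaining the lower bound of Theorem \ref{thm:policy-improvement} with Lemma \ref{lemma:tv_dist}, and then using the $\delta$-NPA assumption to control the total variation term. No new estimates are needed; the work is to substitute inequalities in the right direction.

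First, I take the left-hand inequality of \eqref{eq:thm_1} with $\pi' = \Gamma(q^\pi,\pi)$:
\[
V^{\pi'}(s_0)-V^{\pi}(s_0) \geq \frac{1}{1-\gamma}\mathbb{A}^{\pi}_{\pi'} - \frac{1}{1-\gamma}\varepsilon(d^{\pi'},q^{\pi}).
\]
Because $\varepsilon(d^{\pi'},q^\pi)$ enters with a negative sign, any upper bound on it yields a valid lower bound on the improvement.

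Second, I invoke Lemma \ref{lemma:tv_dist} with $\mu=\mu_k$. Its bounded-value-error hypothesis holds because it is assumed in the statement. This gives
\[
\varepsilon(d^{\pi'},q^\pi)\le \varepsilon(\mu_k,q^\pi)+\frac{2}{1-\gamma}d_{TV}(d^{\pi'},\mu_k).
\]
By Definition \ref{def:nps}, $d_{TV}(d^{\pi'},\mu_k)\le\delta_{dist}$, so $\varepsilon(d^{\pi'},q^\pi)\le \varepsilon(\mu_k,q^\pi)+\frac{2}{1-\gamma}\delta_{dist}$.

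Third, I substitute this into the first display and multiply through by $\frac{1}{1-\gamma}>0$. This produces exactly
\[
V^{\pi'}(s_{0})-V^{\pi}(s_{0}) \geq \frac{1}{1-\gamma}\mathbb{A}_{\pi'}^\pi - \frac{1}{1-\gamma}\varepsilon(\mu_k, q^\pi) - \frac{2}{(1-\gamma)^2}\delta_{dist}.
\]

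There is no real obstacle here; the only care needed is bookkeeping. The $\delta$-NPA condition must be applied to the same $\pi'$ that $\Gamma$ returns, so that the distribution $d^{\pi'}$ in Theorem \ref{thm:policy-improvement} is the one $\mu_k$ is aligned with. The $\delta$ in the definition's name should also be identified with $\delta_{dist}$. Finally, the quantity $q^\pi$ in all three results must be the same estimate.
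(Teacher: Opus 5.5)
Your proposal is correct and takes the same approach as the paper: take the lower bound of Theorem \ref{thm:policy-improvement}, substitute the upper bound on $\varepsilon(d^{\pi'},q^\pi)$ from Lemma \ref{lemma:tv_dist} with $\mu=\mu_k$, and use $d_{TV}(d^{\pi'},\mu_k)\le\delta_{dist}$ from the NPA assumption. Your bookkeeping remarks (the same $\pi'$ and the same $q^\pi$ throughout, and identifying $\delta$ with $\delta_{dist}$) are appropriate; the paper leaves them implicit.
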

	\begin{proof}
		Substitute the upper bound of $\varepsilon(d^{\pi'}, q^\pi)$ from Lemma 1 directly into the lower bound of Theorem 1.
	\end{proof}
	This shows that when $\delta_{dist}$-NPA occurs, then the policy improvement suffers two penalties: one related to $\delta_{dist}$, and the other related to the training error data. It is similar to bounds due to \cite{kakade_and_langford:CPI} and \cite{pmlr-v37-schulman15}. The former assumes that the greedy policy can be approximately determined only for the on-policy distribution -- naturally constraining the application of the method off-policy. The latter assumes no value function error at all. In contrast, Corollary \ref{cor:npa_policy_imp} explicitly includes the value function error on the training distribution.
    \vspace{-0.5em}
	\subsection{Conservative Policy Iteration (CPI)}
	Conservative policy iteration and related methods ensure that the policy improvement operator in Algorithm \ref{alg:generic_api} performs $\delta$-NPA. In the original CPI \citep{kakade_and_langford:CPI}, the next policy is an exponential moving average (EMA) of the greedy policy with past policies, ensuring $d_{TV}(\pi, \pi')$ is controlled. 

	If $\varepsilon(\mu, q^{\pi})$ is bounded, conservative methods achieve monotonic policy improvement, but at the cost of exceedingly small policy improvements.
	For example, with $\gamma=0.99$, the worst-case policy change, $\max_s d_{TV}(\pi_k(s), \pi_{k+1}(s))$, must be less than one quarter of a percent (computed by applying Theorem 12.2 of \cite{agarwal21}). 
	Theorem \ref{thm:policy-improvement} shows that this can waste data and compute if the value function is already accurate enough on $d^{\pi'}$.

	In practice, deep actor-critic methods implement much weaker conservationism than the sufficient conditions of Corollary \ref{cor:npa_policy_imp} and similar theorems due to \cite{kakade_and_langford:CPI} and \cite{pmlr-v37-schulman15} would suggest.
	For example, in their Museli method, \cite{museli} regularize the update to prefer policies where $d_{TV}(\pi_{new}, \pi_{old}) \leq 0.46$, a jump in policy space that makes the policy improvement bound in Corollary \ref{cor:npa_policy_imp} vacuous. 
	As another example, \cite{Engstrom2020Implementation} found that TRPO and PPO held the average KL divergence at around 0.05 -- still at least $20\times$ too large to guarantee policy improvement.
	These methods are still widely applied, despite this theory-practice gap (\cite{stiennon2020learning}, \cite{OpenAI_GPT4_2023}).

\section{Approximate Next Policy Sampling with Stable Value API}

Our approach, \textbf{Approximate Next Policy Sampling (ANPS)}, achieves NPA through a fundamentally different mechanism. Instead of constraining the next target policy $\pi_{k+1}$ to remain close to $\pi_{k}$ (as in CPI), ANPS decouples data collection from the target policy entirely.

This decoupling allows the algorithm to execute massive, unconstrained updates to the target policy. It achieves this by introducing an iteratively refined behavioral policy that aligns the training distribution with the anticipated next policy before the target update is made.

We introduce Stable Value API (SV-API, Algorithm \ref{alg:sv_api}) as a general framework for executing ANPS. SV-API separates the optimization process into three distinct policies at each iteration $k$:

	\begin{itemize}
		\item \textbf{Target Policy ($\pi_{k}$):} Uniquely determines the value target $Q^{\pi_k}$ for evaluation.
		\item \textbf{Behavioral Policy ($\beta_{k}$):} Collects the data and is iteratively refined to scout the state space.
		\item \textbf{Next Target Policy ($\pi_{k+1}$):} The prospective unconstrained update.
	\end{itemize}

    SV-API gates the target policy update behind a stability condition, $\mathcal{C}$. The behavioral policy gathers data to estimate $Q^{\pi_k}$ and is updated repeatedly. The true value function, $Q^{\pi_k}$, corresponds to a fixed target policy, meaning the behavioral policy can safely explore and align with the optimal actions for that fixed value function while limiting premature target updates.





	The behavioral policy gathers data for the estimation of $Q^{\pi_k}$, and is iteratively updated to help align the behavioral and \textit{next} target policy distributions. 
    The true value function, $Q^{\pi_k}$, corresponds to a policy that may differ from the sampling policy after the first round. 

	Algorithm \ref{alg:sv_api} gates the target policy update behind a stability condition, $\mathcal{C}$, which dictates when the value estimate and distribution have stabilized sufficiently to permit a target policy update. 
	In this work, we explore two concrete instantiations of $\mathcal{C}$:

  \begin{enumerate}
        \item \textbf{Static Intervals:} The target policy is updated every $K$ rounds, where $K$ is a hyperparameter. It is generalized by the following dynamic thresholding scheme.
        
      \item \textbf{Dynamic Thresholding:} The update occurs when the change in the value estimates falls below a hyperparameter $\delta_v$ for at least $K_{min}$ rounds. Specifically, we track the absolute critic change:
      \begin{align}
        \text{diff}_{k} & \doteq \sum_{s,a \in \mathcal{D}}|q_{k}^{\pi_k}(s,a) - q_{k-1}^{\pi_{k-1}}(s,a)|
      \end{align} 
      This quantity is a measurable proxy for $\varepsilon(\mu_{k}, q_{k}^{\pi_k})$ and $d_{TV}(\mu_{k-1},\mu_{k})$. 
      When either of these is large, $\text{diff}_{k}$ may also be large.
      To help ensure transferability across environments, we normalize $\text{diff}_{k}$ by the return before comparing to $\delta_v$.
      The full convergence logic, shown in Algorithm \ref{alg:convergence}, also uses a maximum number of rounds spent evaluating a fixed policy, $K_{max}$ to prevent stalling. 
  \end{enumerate}
Static interval thresholding can be obtained by setting the maximum and minimum number of iterations to the same value: i.e. $K_{min} = K_{max} = K$.
We now present an experiment that allows us to observe the theoretical quantities from Section \ref{sec:preliminaries}, and then return to our theoretical analysis. 


\subsection{An empirical demonstration on the Four Rooms MDP}
\begin{figure}[t]
    \centering
    \fbox{
    \includegraphics[width=.975\textwidth]{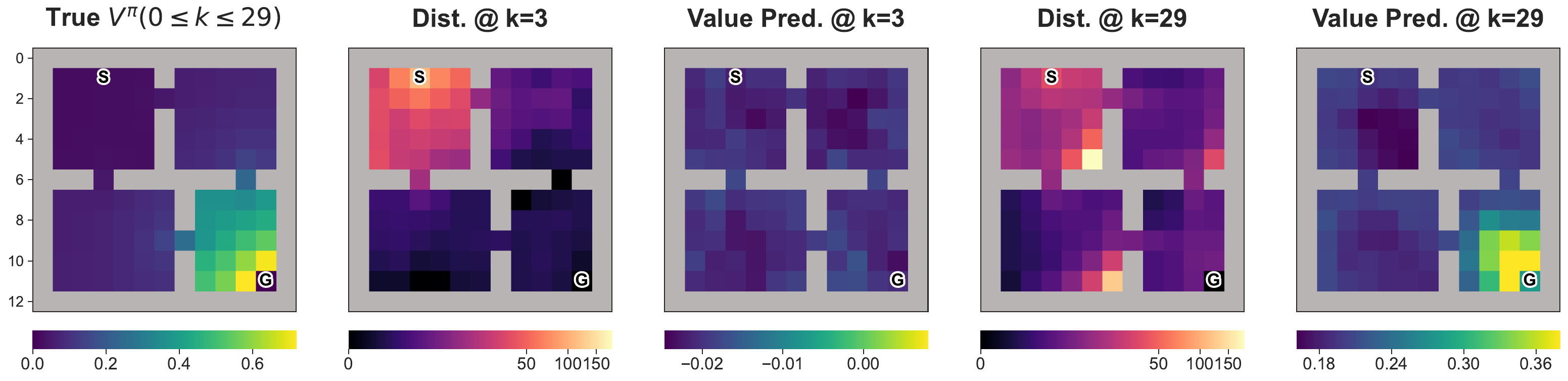}
    }
    \caption{Snapshots two rounds of SV-PPO on Four Rooms. True $V^\pi$ indicates the true value function of the target policy, which is the same in both rounds. ``Dist. @ k'' is the moving average empirical state visitation distribution at round $k$. ``Value Pred.'' is the predicted value.}
    \label{fig:four_rooms_grids}
\end{figure}
To examine for how $\varepsilon(\mu_k, q^{\pi_k})$ and next-policy alignment interact in practice, we examine the Four Rooms environment \citep{journals/ai/SuttonPS99}. We compare a conservative API method (in the form of PPO) baseline against an implementation of our Stable Value API (Algorithm \ref{alg:sv_api}, also applied to PPO). We defer the implementation details to Section \ref{sec:sv_ppo_main}, to focus on learning dynamics.

In standard PPO, data collection is due to the target policy. When applied to Four Rooms, PPO immediately updates the policy based on initial, high-error value estimates. As shown in Figure \ref{fig:four_rooms_curves}, this leads to value churn \citep{tang2024churn} and catastrophic forgetting around iteration 80. The critic overestimates the value of rarely visited regions (the bottom room and leftmost wall) leading to degradation of the policy in these regions. The premature policy updates adjust the policy to avoid these regions, meaning the data required to correct the value estimate is never gathered (See Supplemental Materials \ref{sec:supp_4_rooms} for a detailed explanation).

In contrast, SV-API decouples these elements by holding the target policy fixed while a separate behavioral policy iteratively gathers data. SV-API delays updating the target policy until the change in the value estimates falls below a stability threshold. Figure \ref{fig:four_rooms_grids} shows how this delay allows for stronger policy evaluation of the very first target policy, which is frozen until round $k=30$. During these rounds, the behavioral policy acts as a scout expanding the set of visited states and providing coverage of the goal room. Without this sustained data collection, accurate value estimation of $V^{\pi_0}$ at these distant, high-value states would be impossible.

By evaluating target policies for longer, SV-API suppresses value overestimation, maintains coverage of high-value states, and successfully converges to the optimal policy. Figure \ref{fig:four_rooms_curves}a plots the value of the sampling policy, while Figure \ref{fig:four_rooms_curves}b shows plots of $\varepsilon(\mu_{k+1}, q^{\pi_k}_k)$, the error term in Theorem \ref{thm:policy-improvement}. Value over-estimation and catastrophic forgetting in rarely visited regions lead to spikes in this error for PPO. Figure \ref{fig:four_rooms_curves}c shows that the degree of NPA is similar for both methods, despite the less frequent target policy updates of SV-API. Figure \ref{fig:four_rooms_curves}d shows that the metric tracked for convergence is reasonably correlated with low value error and strong NPA.

    \begin{figure*}[t]
        \centering
        \setlength{\fboxsep}{0pt}
        \setlength{\fboxrule}{1pt}
    
        \fbox{\includegraphics[width=\textwidth]{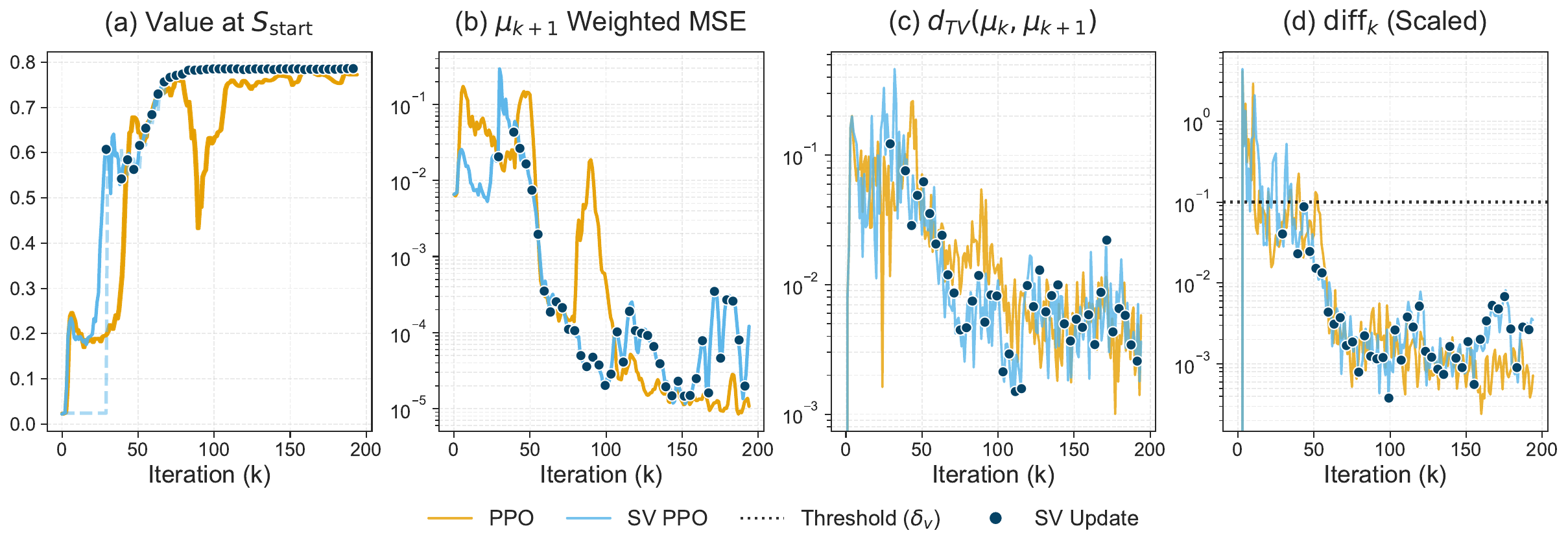}}
        \caption{
        Four Rooms learning curves. Dots indicate target policy updates for SV-API.
        (a) Expected return (value) of the data-collection policy for both algorithms; 
        (b) Next-Policy Weighted Mean Squared Value Error $\mathbb{E}_{\mu_{k+1}}(V
        ^{\pi}(s) - v_{k}(s))^{2}$; 
        (c) TV distance between successive training distributions $\mu_{k}$;
        and (d) Convergence: scaled $\text{diff}_k$ vs. the threshold, $\delta_v$ (dashed line).}
        \label{fig:four_rooms_curves}
    \end{figure*}
    
    \vspace{-0.5em}
	\subsection{Analysis of SV-API}
	\label{sec:analysis_sv_rl}
    To guarantee monotonic policy improvement, SV-API must ultimately achieve $\delta$-NPA in the round that the target policy is updated. SV-API attempts to achieve this by iteratively aligning the behavioral policy, obtained as $\Gamma(q_{k-1}^{\pi_{k-1}}, \beta_{k-1}, \mathcal{D}_{k-1})$, with the next policy, which is obtained as $\Gamma(q_{k}^{\pi_{k}}, \beta_{k}, \mathcal{D}_{k})$.

    Because the target policy update is simply an assignment ($\pi_{k+1} \gets \beta_{k+1}$), the difference between this round's training distribution and the next state distribution is due to the change in the behavioral policy during that final update round. The following Lemma from \cite{agarwal21} formally links a bounded change in policy space to a bounded change in state-action visitation distribution.

	\begin{lemma}
		\label{lemma:tv} For two policies $\beta$ and $\beta'$, if 
		$d_{TV}(\beta'(s),\beta(s)) \leq \delta(1-\gamma)$ for all states $s$, then $d_{TV}(d^{\beta'}(s,a),d^{\beta}
		(s,a)) \leq \delta$.
	\end{lemma}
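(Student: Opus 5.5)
We prove Lemma \ref{lemma:tv} by writing both visitation distributions as fixed points of the same discounted propagation and comparing them through a telescoping (simulation-lemma style) identity. Throughout, $d_{TV}$ is half the $\ell_1$ distance. Let $P_\beta$ denote the state-action transition kernel under $\beta$, namely $P_\beta((s,a),(s',a')) = P(s'|s,a)\beta(a'|s')$. Let $\nu_\beta(s,a) = \mathbb{1}[s=s_0]\beta(a|s_0)$ be the initial state-action distribution. Then
\[
d^{\beta} = (1-\gamma)\sum_{t\ge 0}\gamma^t \nu_\beta P_\beta^t,
\qquad\text{equivalently}\qquad
d^{\beta} = (1-\gamma)\nu_\beta + \gamma\, d^{\beta}P_\beta,
\]
and the same holds for $\beta'$.

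\textbf{Step 1: the difference identity.} Subtracting the two fixed-point equations gives
\[
d^{\beta'} - d^{\beta} = (1-\gamma)(\nu_{\beta'}-\nu_\beta) + \gamma (d^{\beta'}-d^{\beta})P_{\beta'} + \gamma\, d^{\beta}(P_{\beta'}-P_\beta).
\]
We move the $P_{\beta'}$ term to the left and invert $(I-\gamma P_{\beta'})$ as a Neumann series. This yields
\[
d^{\beta'} - d^{\beta} = \big[(1-\gamma)(\nu_{\beta'}-\nu_\beta) + \gamma\, d^{\beta}(P_{\beta'}-P_\beta)\big](I-\gamma P_{\beta'})^{-1}.
\]

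\textbf{Step 2: norm bounds.} The operator $(I-\gamma P_{\beta'})^{-1}$ is $\sum_t \gamma^t P_{\beta'}^t$. Each $P_{\beta'}^t$ is a stochastic kernel acting on signed measures, so it does not increase the $\ell_1$ norm, and the whole operator has $\ell_1\to\ell_1$ norm at most $1/(1-\gamma)$.

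For the bracketed terms, write $\epsilon = \delta(1-\gamma)$ for the assumed per-state TV bound, so $\max_s d_{TV}(\beta'(s),\beta(s))\le\epsilon$. Then:
\begin{itemize}
\item The initial term satisfies $\|\nu_{\beta'}-\nu_\beta\|_1 = \|\beta'(s_0)-\beta(s_0)\|_1 \le 2\epsilon$.
\item For any distribution $\rho$ over state-action pairs, $\|\rho(P_{\beta'}-P_\beta)\|_1 = \sum_{s'}\rho P(s')\,\|\beta'(s')-\beta(s')\|_1 \le 2\epsilon$, where $\rho P(s')$ is the next-state marginal.
\end{itemize}
Combining these,
\[
\|d^{\beta'}-d^{\beta}\|_1 \le \frac{1}{1-\gamma}\big((1-\gamma)2\epsilon + \gamma\, 2\epsilon\big) = \frac{2\epsilon}{1-\gamma}.
\]
Halving gives $d_{TV}(d^{\beta'},d^\beta)\le \epsilon/(1-\gamma) = \delta$, which is the claim.

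The main thing to get right is the bookkeeping in Step 1. The difference must be expressed so that the policy mismatch appears only through single-step kernel differences, which each contribute $2\epsilon$. The mismatch at the initial state must also be included in the state-action distribution, since it is one of the terms contributing to the bound.

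Alternatively, one can couple the two trajectories. Under the natural coupling, at each step the two action draws disagree with probability at most $\epsilon$. Before the first disagreement the trajectories coincide, so $\sum_t \gamma^t(1-\gamma)\Pr[\text{a disagreement has occurred by step } t]$ bounds $d_{TV}$. Since that probability is at most $(t+1)\epsilon$, this gives $\le \epsilon\sum_t(1-\gamma)\gamma^t(t+1) = \epsilon/(1-\gamma) = \delta$, confirming the same constant.
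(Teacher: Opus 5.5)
Your proof is correct. Both of your arguments give exactly the stated constant. You also handle the one place where a factor is easy to lose. The action mismatch at the current step, which is the $(1-\gamma)(\nu_{\beta'}-\nu_\beta)$ term (or $(t+1)\epsilon$ rather than $t\epsilon$ in the coupling count), is what turns the familiar $\gamma\epsilon/(1-\gamma)$ bound for state-only visitation distributions into $\epsilon/(1-\gamma)=\delta$ for state-action distributions.

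The paper gives no proof of its own; it imports the lemma from Agarwal et al.\ (2021). To my recollection, the standard argument there bounds the drift of the time-$t$ marginals and then sums them with discount weights, which is essentially your second, coupling-style argument.

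Your main route, the resolvent (telescoping) identity, gives something the max-based statement hides. Before any bounding, the bracket in Step 1 is exactly $d^{\beta}_S(s')\,\bigl(\beta'(a'|s')-\beta(a'|s')\bigr)$, because $(1-\gamma)\mathbb{I}[s'=s_0]+\gamma\,(d^{\beta}P)(s')$ is precisely the state marginal $d^{\beta}_S$ of $d^{\beta}$. Hence $d_{TV}(d^{\beta'},d^{\beta})\le \frac{1}{1-\gamma}\,\mathbb{E}_{s\sim d^{\beta}}\, d_{TV}(\beta'(s),\beta(s))$. The hypothesis can therefore be weakened from a worst case over all states to an average over the training distribution $\mu_k=d^{\beta_k}$. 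That average is much closer to what PPO-style clipping on sampled states actually controls in Theorem~\ref{thm:sv_improvement}.

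The coupling route is more elementary and gives the per-step marginal bound $(t+1)\epsilon$ as a by-product. It also extends verbatim to non-stationary policies. With a little more care it yields the same averaged bound: write $\epsilon(s)=d_{TV}(\beta'(s),\beta(s))$ and bound the probability that the first disagreement occurs at step $k$ by $\mathbb{E}[\epsilon(s_k)]$ along the $\beta$-trajectory.
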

	\begin{corollary}
		In update round $k$, Algorithm \ref{alg:sv_api} performs $\delta$-NPA if
		$\max_{s}d_{TV}(\beta_{k}(\cdot|s), \beta_{k+1}(\cdot|s)) \leq \delta(1-\gamma)$. \label{cor:anps_svrl}
	\end{corollary}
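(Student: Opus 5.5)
The plan is to identify the training distribution and the next-policy distribution in the update round, and then apply Lemma \ref{lemma:tv} to the behavioral policies.

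\textbf{Identifying the distributions.} In round $k$, Algorithm \ref{alg:sv_api} collects $\mathcal{D}_k$ by rolling out $\beta_k$. The training distribution for \texttt{Eval} in that round is therefore the discounted on-policy visitation of the behavioral policy, $\mu_k = d^{\beta_k}$. I take this as the intended idealization: the empirical distribution of $\mathcal{D}_k$ is replaced by its population counterpart, exactly as in Corollary \ref{cor:npa_policy_imp}. If $\mathcal{C}$ fires in round $k$, the target update is the assignment $\pi_{k+1} \gets \beta_{k+1}$. Hence the next policy is $\pi' = \pi_{k+1} = \beta_{k+1}$, and its visitation distribution is $d^{\pi'} = d^{\beta_{k+1}}$. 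By Definition \ref{def:nps}, $\delta$-NPA in round $k$ is the statement $d_{TV}(d^{\beta_{k+1}}, d^{\beta_k}) \leq \delta$.

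\textbf{Applying the lemma.} I invoke Lemma \ref{lemma:tv} with $\beta = \beta_k$ and $\beta' = \beta_{k+1}$. The hypothesis $\max_s d_{TV}(\beta_k(\cdot|s), \beta_{k+1}(\cdot|s)) \leq \delta(1-\gamma)$ gives the per-state condition required by the lemma, since TV distance is symmetric. The lemma then yields $d_{TV}(d^{\beta_{k+1}}, d^{\beta_k}) \leq \delta$, which is $\delta$-NPA with $\mu_k = d^{\beta_k}$ and $\pi' = \beta_{k+1}$.

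\textbf{Main obstacle.} The only step needing care is the first one, namely justifying $\mu_k = d^{\beta_k}$. Data in round $k$ may be pooled or reused from earlier rounds, or the empirical distribution may be used in place of the population one. I would resolve this by stating explicitly that the corollary concerns the on-policy sampling distribution of $\beta_k$. The subtlety that $\mathtt{Eval}$ targets $Q^{\pi_k}$ rather than $Q^{\beta_k}$ does not affect NPA, because NPA constrains only the distributions and not the value target.

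As an optional sanity check, I would sketch the lemma's own content: a coupling or telescoping (simulation-lemma) argument bounds the state-visitation TV by $\frac{\gamma}{1-\gamma}\max_s d_{TV}(\beta_k(s), \beta_{k+1}(s))$, and adding the action-level mismatch gives at most $\frac{1}{1-\gamma}$ times the per-state TV. This confirms the $(1-\gamma)$ scaling in the hypothesis, but since Lemma \ref{lemma:tv} is assumed, it is not needed for the proof.
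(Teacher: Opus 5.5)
Your proposal is correct and matches the paper's argument: in the update round you identify $\mu_k = d^{\beta_k}$ and $\pi_{k+1} = \beta_{k+1}$, then invoke Lemma~\ref{lemma:tv} with $\beta = \beta_k$ and $\beta' = \beta_{k+1}$, which is exactly the reasoning the paper spells out in the proof of Theorem~\ref{thm:sv_improvement}. Making the idealization $\mu_k = d^{\beta_k}$ explicit is worthwhile, since the corollary leaves it implicit and the paper states it only as an assumption of Theorem~\ref{thm:sv_improvement}.
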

    This corollary provides the theoretical mechanism for ANPS: the algorithm must iteratively refine the behavioral policy until $\beta_k$ and $\beta_{k+1}$ are sufficiently similar, and then terminate in that round. The next theorem applies the corollary to show that SV-API makes safe, unconstrained, updates to the target policy if the value error is low and the behavioral policy satisfies NPA.

\begin{theorem}[SV-API Improvement]
		\label{thm:sv_improvement} 
		In Algorithm \ref{alg:sv_api}, let the training data distribution $\mu_k$ be the stationary distribution induced by the behavioral policy, $\mu_k \doteq d^{\beta_k}$. Assume the behavioral policy has stabilized such that $d_{TV}(\beta_{k}(s), \beta_{k+1}(s)) \leq \delta(1-\gamma)$ and the target policy update occurs ($\pi_{k+1} \gets \beta_{k+1}$). Finally, assume bounded error on the training distribution $\varepsilon(\mu_{k},q^{\pi_k})\leq \varepsilon$ and globally $\max_{s,a,\pi}|Q^{\pi}(s,a) - q^{\pi}(s,a)| \leq \frac{1}{1-\gamma}$. Then, the performance improvement is bounded by:
		\begin{align}
			V^{\pi_{k+1}}(s_{0}) - V^{\pi_k}(s_{0}) \geq \frac{1}{1-\gamma}\left( \mathbb{A}^{\pi_k}_{\pi_{k+1}} - \varepsilon - \frac{2\delta}{1-\gamma}\right)
		\end{align}
	\end{theorem}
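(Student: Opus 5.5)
The plan is to chain the three earlier results: Theorem \ref{thm:policy-improvement} supplies the lower bound, Lemma \ref{lemma:tv_dist} controls the error term on the next policy's distribution, and Lemma \ref{lemma:tv} (equivalently Corollary \ref{cor:anps_svrl}) supplies the TV control. Set $\pi = \pi_k$, $\pi' = \pi_{k+1}$ and $q^{\pi} = q^{\pi_k}_k$. The left inequality of Theorem \ref{thm:policy-improvement} then gives
\[
V^{\pi_{k+1}}(s_0) - V^{\pi_k}(s_0) \geq \frac{1}{1-\gamma}\left(\mathbb{A}^{\pi_k}_{\pi_{k+1}} - \varepsilon(d^{\pi_{k+1}}, q^{\pi_k})\right).
\]
That theorem is stated for $\pi' = \Gamma(q,\pi)$, but its content is only the PDL plus a pointwise triangle inequality. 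So I would note that it holds for an arbitrary $\pi'$, in particular for $\pi_{k+1} = \beta_{k+1} = \Gamma(q^{\pi_k}_k, \beta_k, \mathcal{D}_k)$. Concretely, writing $A^{\pi}(s,a) = (q^{\pi}(s,a) - V^{\pi}(s)) + (Q^{\pi}(s,a) - q^{\pi}(s,a))$ inside the PDL and bounding the second term by its absolute value yields exactly the bound above.

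Next I bound $\varepsilon(d^{\pi_{k+1}}, q^{\pi_k})$. Lemma \ref{lemma:tv_dist} applies with $\mu = \mu_k = d^{\beta_k}$, using the assumed global error bound $\frac{1}{1-\gamma}$. It gives $\varepsilon(d^{\pi_{k+1}}, q^{\pi_k}) \le \varepsilon(\mu_k, q^{\pi_k}) + \frac{2}{1-\gamma} d_{TV}(d^{\pi_{k+1}}, d^{\beta_k})$, and the first term is at most $\varepsilon$ by hypothesis.

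The remaining step is to control the TV distance. Since the target update is the assignment $\pi_{k+1} \gets \beta_{k+1}$, we have $d^{\pi_{k+1}} = d^{\beta_{k+1}}$. The per-state hypothesis $d_{TV}(\beta_k(s), \beta_{k+1}(s)) \le \delta(1-\gamma)$ for all $s$ then lets Lemma \ref{lemma:tv} (equivalently Corollary \ref{cor:anps_svrl}) give $d_{TV}(d^{\beta_{k+1}}, d^{\beta_k}) \le \delta$. So $\mu_k$ achieves $\delta$-NPA for $\pi_{k+1}$.

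Substituting back gives $\varepsilon(d^{\pi_{k+1}}, q^{\pi_k}) \le \varepsilon + \frac{2\delta}{1-\gamma}$, and plugging this into the first display yields the claimed bound. This final step is exactly Corollary \ref{cor:npa_policy_imp} with $\delta_{dist} = \delta$.

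The main obstacle is bookkeeping rather than analysis. First, the critic is trained for the target $\pi_k$ on data from $\beta_k$, while the improvement is measured relative to $\pi_k$. The PDL must therefore be applied with $\pi = \pi_k$ (not $\beta_k$), and only the weighting distribution uses $\beta$. Second, $\mu_k$ must be read as $d^{\beta_k}$ rather than a finite-sample empirical distribution, which the statement's hypothesis provides. Third, the hypothesis on $\beta_k, \beta_{k+1}$ must hold for every state, as Lemma \ref{lemma:tv} requires.
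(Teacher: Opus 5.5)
Your proposal is correct and follows the paper's proof. The paper likewise uses $\pi_{k+1}=\beta_{k+1}$ and Lemma~\ref{lemma:tv} to get $d_{TV}(\mu_k, d^{\pi_{k+1}})\le\delta$, then invokes Corollary~\ref{cor:npa_policy_imp}, which you unpack into Theorem~\ref{thm:policy-improvement} plus Lemma~\ref{lemma:tv_dist}; your remark that Theorem~\ref{thm:policy-improvement} holds for arbitrary $\pi'$ (needed because $\pi_{k+1}=\Gamma(q^{\pi_k}_k,\beta_k,\mathcal{D}_k)$ rather than $\Gamma(q,\pi_k)$) is a worthwhile clarification the paper leaves implicit.
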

	\begin{proof}
		In an update round, the target policy becomes the new behavioral policy: $\pi_{k+1} = \beta_{k+1}$. 
		The training data distribution $\mu_k$ was generated by rolling out $\beta_k$. 
		Lemma \ref{lemma:tv} says that a maximum policy divergence of $\delta(1-\gamma)$ implies a state-action visitation distribution divergence of at most $\delta$, thus $d_{TV}(\mu_{k}, d^{\pi_{k+1}}) \leq \delta$ (NPA). 
		From here, Corollary \ref{cor:npa_policy_imp} yields the result.
	\end{proof}

	It is instructive to contrast the bound in Theorem \ref{thm:sv_improvement} with the standard monotonic improvement bounds found in conservative methods like CPI \citep{kakade_and_langford:CPI}. In CPI, the updated policy is a mixture of the current policy and the greedy policy. Setting the mixture coefficient to $\delta(1-\gamma)$ restricts the maximum policy divergence to match the conditions of Theorem \ref{thm:sv_improvement}. In exchange, the advantage term is scaled down by $\delta(1-\gamma)$. In other words, standard conservatism inherently penalizes the magnitude of the policy improvement step.\footnote{We compare our Theorem \ref{thm:sv_improvement} to the monotonic policy improvement theorem in more detail in Supplemental Materials \ref{sec:kakade_bound}.}

    In contrast, Theorem \ref{thm:sv_improvement} considers an unconstrained assignment to the target policy. SV-API avoids scaling down the policy improvement because it does not explicitly set $\delta$. Rather, by iteratively applying the improvement operator to the sequence of behavioral policies, the algorithm waits for the behavioral policy to stabilize, ensuring $\delta$-NPA. If this occurs, $\delta$ is kept under control, but the estimated improvement $\mathbb{A}^{\pi_k}_{\pi_{k+1}}$ can still be large because $\pi_{k+1}$ is the accumulation of multiple training steps. In short, a convergent sequence of behavioral policies avoids problematic distribution shift while permitting massive leaps in target policy performance.
    
    This is the motivation behind our proposed dynamic convergence criterion, which simultaneously estimates value function change and behavioral policy change. When the value network stops changing on shifting distribution, this is a proxy for the underlying training distribution stabilizing. This indicates that the next behavioral policy may remain similar, and it is safe to make an update. In this way distribution shift is controlled by the stabilization of the data-collection process, in addition to
    whatever conservative properties $\Gamma$ may have.

	\section{Results}
	\label{sec:sv_ppo_main}
    We modify PPO to implement
	Algorithm \ref{alg:sv_api}. This allows for direct comparison with a strong conservative baseline. Moreover, by preventing large changes in both the value function and (behavioral) policy, PPO's update helps induce the stabilization dynamic described above, where the change in one controls the other. Standard PPO can be obtained from the same code by ensuring $\mathcal{C}$ is always true, for example with $K_{max}=1$.

    SV-PPO (Algorithm \ref{alg:sv_ppo}) freezes a set of policy parameters, which correspond to the target policy. This frozen network is used to compute importance sampling (IS) ratios for the the value target and advantage estimates. These ensure that the estimates track $V^\pi$ and $A^\pi$ despite potentially off-policy data. We use V-Trace \citep{espeholt2018impalascalabledistributeddeeprl} for the value estimation and ReTrace($\lambda$) \citep{DBLP:retrace} for the advantage estimation. 
    
    The clipped IS ratio used in PPO's policy objective corresponds to the ratio of two successive \textit{behavioral} policies.
    This implies the sequence of behavioral polices can drift away from the target policy, but at any round $k$, the difference in the final behavioral policy and next target policy will be controlled in the same fashion as PPO's target policy update. 
    A detailed description all the changes and the loss function are in Supplemental Material \ref{appendix:sv_ppo}.
    
    Compared to PPO, no additional training or environment sampling is necessary.
    In each round, the only additional requirement is running inference of the policy network to obtain the IS weights. Whenever $\delta_{v}= \infty$ and $K_{min}= 1$, it becomes exactly standard PPO. Aggregate results are shown in Table \ref{tab:aggregate_performance}.

    \begin{figure}[t]
		\fbox{
		\begin{minipage}[t]{0.54\textwidth}
			\begin{algorithm}
				[H]
				\caption{Stable Value PPO}
				\label{alg:sv_ppo}
				\begin{algorithmic}
                \setstretch{1.04}
				\Require Target $\pi_{\bar{\theta}}$, Value $v_{\phi_0}$, Num. Env. Steps $T$, 
                \Statex Params: $\delta_{v}, K_{min}, K_{max}, \alpha_{ent}$
                    
					\State $\beta_{{\theta}_{0}}\gets \pi_{\bar{\theta}}$, $k_{\pi}\gets 0$, $n_{\text{stable}}\gets 0$

					\For{$k = 0, 1, 2, \dots$} \State
					$\mathcal{D}_{k}\leftarrow \{(s_{t}, a_{t}, s'_{t}, r_{t})\}_{t=1:T}, a_{t}\sim \beta_{\theta_k}(s_t)$
                    \State \textbf{1. Evaluate Target Policy} (\texttt{Eval})
					\State
					$y_{t}\leftarrow \text{VTrace}(\mathcal{D}_{k}, \beta_{\theta_k}, \pi_{\bar{\theta}})$
					\State
					$\hat{A}_{t}\leftarrow \text{Retrace-GAE}(\mathcal{D}_{k}, \beta_{\theta_k}, \pi_{\bar{\theta}}
					)$
					\State $\phi_{k+1}\leftarrow \text{Step}_{\phi}\left(\frac{1}{2}\sum_{t}(y_{t}- v_{\phi_k}(s_{t}))^{2}\right)$
                    \State \textbf{2. Update Behavioral Policy} ($\Gamma$)
					\State $\theta_{k+1}\leftarrow \text{Step}_{\theta}\left(J^{clip}(\theta_k)+ \alpha_{ent}H(\beta_{\theta_k})\right)$
                    \State \textbf{3. Stability Criterion }  ($\mathcal{C}$)
                    \State $\text{diff}_{k} \gets \frac{1}{T}\sum_{t}|y_{t} - v_{\phi_k}(s_{t})|$
                    \State $\text{stable}, n_{\text{stable}} \gets \texttt{Conv}(y_{1:T}, \text{diff}_{k}, n_{\text{stable}}, k_{\pi})$

					\If{\text{conv.}} 
                    \State $k_{\pi} \gets 0$
                    \State $\bar{\theta} \gets \theta_{k+1}$ \Comment{Update target policy}
					\Else \State $k_{\pi}\pluseq 1$ \Comment{Hold target policy}
					\EndIf \EndFor
				\end{algorithmic}
			\end{algorithm}
		\end{minipage}%
        } 
        \hfill \fbox{%
		\begin{minipage}[t]{0.42\textwidth}
			\begin{algorithm}
				[H]
				\caption{\texttt{Conv}}
				\label{alg:convergence}
				\begin{algorithmic}
					[1] \Require $y_{1:T}$, $\text{diff}_{k}$, $n_{\text{stable}}$, $k_{\pi}$ \Statex In-Scope (Alg.
					\ref{alg:sv_ppo}): $\delta_{v}, K_{min}, K_{max}$

					\State \textbf{Check Stability:} \State $\overline{y}_{T}\gets \frac{1}{T}\sum_{t=1}^{T}|y_{t}|$
					\State $I_{\text{stable}}\gets \mathbb{I}(\text{diff}_{k}\leq \overline{y}_{T}\delta_{v})$
					\Statex \State \textbf{Increment Number of Stable Iters:} \State $n_{\text{stable}}\gets (n_{\text{stable}}
					+ 1) \times I_{\text{stable}}$ \Statex
					
					\State \textbf{Stability Gating / Early Stopping:} \If{$n_{\text{stable}}\geq K_{min}$ \text{or} $k_{\pi}\geq K_{max}$}
					\State \Return \text{True}, $0$ \Else \State \Return \text{False}, $n_{\text{stable}}$ \EndIf
				\end{algorithmic}
			\end{algorithm}
		\end{minipage}
        }
	\end{figure}

    \subsection{Atari}

We run on sixteen Atari Games: the Atari-10 Benchmark (\cite{bellemare13arcade}, \cite{atari_5}), plus six well known and interesting games: Breakout, Asterix, Freeway, Seaquest, Space Invaders, and Ms. Pacman. We run a dynamic variant that sets $\delta=0.01$ and $K_{max}=33$. This configuration was obtained from tuning on MinAtar \cite{young19minatar}. We also run a static interval variant, setting $K=9$. Results are shown in Figure \ref{fig:sv_ppo_atari}.

\begin{figure}[htbp]
        \centering
        \begin{subfigure}[b]{0.48\textwidth}
            \centering
            {\setlength{\fboxsep}{0pt}\setlength{\fboxrule}{0.5pt}\fbox{%
                \includegraphics[width=\textwidth]{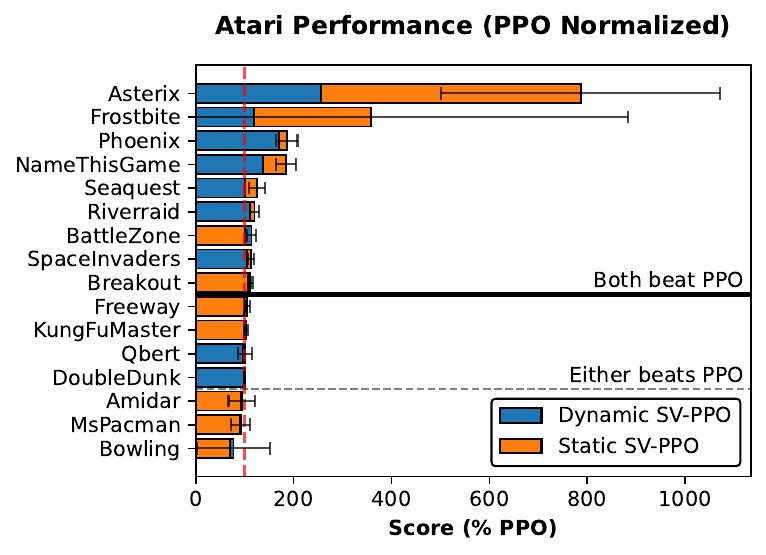}%
            }}
            \caption{Atari benchmark suite. Five Seeds.}
            \label{fig:sv_ppo_atari}
        \end{subfigure}
        \hfill
        \begin{subfigure}[b]{0.48\textwidth}
            \centering
            {\setlength{\fboxsep}{0pt}\setlength{\fboxrule}{0.5pt}\fbox{%
                \includegraphics[width=\textwidth]{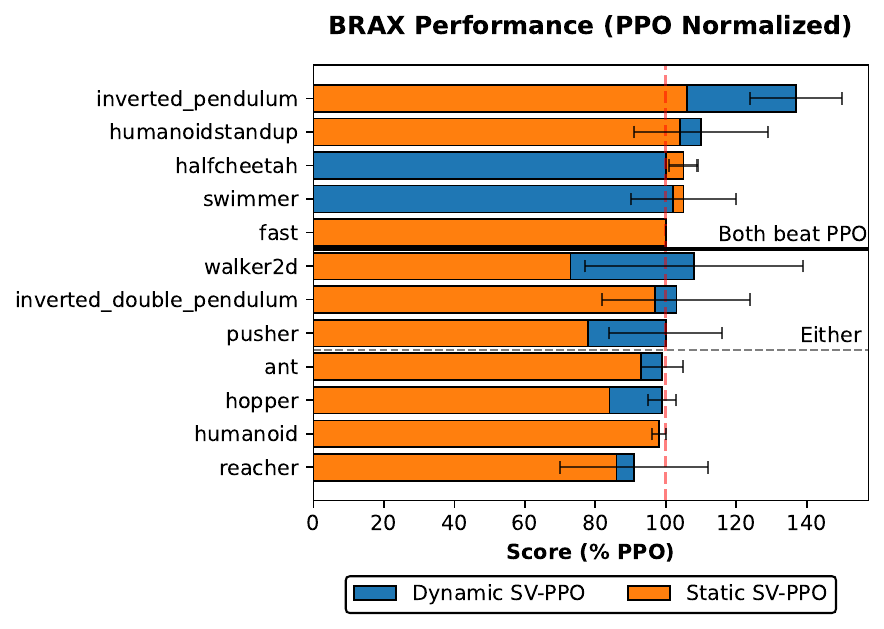}%
            }}
            \caption{BRAX continuous control suite. Twelve seeds.}
            \label{fig:sv_ppo_brax}
        \end{subfigure}
        
        \vspace{2mm} 
        \caption{\textbf{SV-PPO Performance Comparison.} Normalized scores relative to the PPO baseline ($100\%$). Bars show the performance of both methods; the colored tip indicates the margin by which the winning method (Dynamic in blue, Static in orange) outperformed the runner-up.}
        \label{fig:sv_ppo_combined_results}
    \end{figure}

\begin{table}[ht]
\centering
\vspace{2em}
\caption{Aggregate performance of Static and Dynamic SV-PPO across Brax and Atari environments (12 and 5 training seeds respectively). Scores are reported as a percentage of PPO performance. The left column shows median performance, with first and third quartile performance in the bracket. The mean score shows the geometric mean normalized performance, with multiplicative spread shown in parentheses. KL($\pi$) indicates mean change in the target policy during learning, while KL($\beta$) is the same for the behavioral.}
\label{tab:aggregate_performance}
\resizebox{\textwidth}{!}{%
\begin{tabular}{l c c c c c}
\toprule
 & \textbf{Med. Score (\% PPO)} & \textbf{Mean} & \textbf{Beat PPO (\%)} & \textbf{KL ($\pi$)} & \textbf{KL ($\beta$)} \\
\midrule
Static (Brax)   & 97.4 {[}85.7, 104.0{]}   & 93.4 ($\times/\div$ 1.13)   & 42.0 & 14.89$\times$ & 1.40$\times$ \\
Dynamic (Brax)  & 100.2 {[}98.5, 104.2{]}  & 103.2 ($\times/\div$ 1.11)  & 50.0 & 1.08$\times$  & 1.05$\times$ \\
\addlinespace
Static (Atari)  & 104.0 {[}98.8, 139.8{]}  & 133.3 ($\times/\div$ 1.84)  & 56.2 & 6.11$\times$  & 0.94$\times$ \\
Dynamic (Atari) & 105.0 {[}98.2, 115.2{]}  & 113.2 ($\times/\div$ 1.32)  & 68.8& 1.96$\times$  & 0.70$\times$ \\
\bottomrule
\end{tabular}%
}
\end{table}

Performance on all games is shown in Figure \ref{fig:sv_ppo_combined_results}. Dynamic SV-PPO has better mean converged performance on 11 out of the 16 games -- and is statistically significantly better on six ($p=0. 1$, Welch's T-Test), and significantly worse on zero. Static SV-PPO is statistically significantly better on seven. 

Both variants make fewer, larger updates to the target policy. The static interval variant makes exactly one ninth the number of target policy updates, while the dynamic makes between anywhere from 3\% to 95\% as many. The size of the average target policy update is larger by on average 96\% for the dynamic variant, and 511\% for the static variant   (See Table \ref{tab:aggregate_performance}, and Table \ref{tab:sv_atari_results} in the supplemental material for a per-game breakdown). Moreover, the size of the behavioral update is smaller, especially for the dynamic variant. This is empirical verification that SV-PPO can keep the change in sampling policy ($\delta$ in Theorem \ref{thm:sv_improvement}) smaller than PPO's heuristic conservative updates can alone. 

The majority of games benefited from slower target policy updates -- more samples and updates spent learning the value function for a fixed policy. The dynamic variant allows for fast updates at the end (See Figures \ref{fig:scaled_v_diff_all_atari} and \ref{fig:rets_over_time_all_atari} in the supplemental material). In contrast, the static variant slows down the policy updates for all of training. We expected better performance from the dynamic variant, as it helps ensure the policy and value drift is small at the time of the upate. The dynamic convergence criterion resulted in to infrequent policy updates at the beginning (resembling policy iteration) and frequent refinements to the policy at the end (resembling value iteration). However, the static interval variant improved performance on most games where the stable value principle was helpful. This suggests that the convergence heuristic based on could be improved such that these games have even fewer policy updates.
    
    \subsection{Continuous Control}
	We test on continuous control tasks using the Brax physics simulator \citep{brax2021github}. For continuous control, the policy is a Gaussian, and the additional variance due to the off-policy correction appears to be quite high, potentially offsetting the benefits of regularizing value learning. After initially experimenting with $K_{max}=32$, we ended up finding better results setting $K_{max}=8$ to control how off-policy the data could become. The static variant sets $K_{max}=K_{min}=8$, and performs nearly as well as standard PPO with one eighth of the updates.

	The dynamic thresholding variant set $\delta_v = 0.05$ and $K_{max} = 8$. This performed a median of 1\% better across the suite.
  
	SV-PPO can degrade performance on environments where the PPO policy is highly deterministic, such as the \texttt{reacher} and \texttt{ant}. We hypothesize that applying Stable Value API to Soft Actor-Critic (SAC) \cite{conf/icml/HaarnojaZAL18} may help since SAC's entropy regularization could keep the IS ratios closer to one.
    
    \vspace{-0.5em}
    \section{Conclusion}
    This work introduced ANPS, a new mechanism for policy improvement that is based on iteratively refining a behavioral policy until it is suitable to become the new target policy. 
    We implement ANPS using SV-API, which waits until the change in the agent's value and policy predictions falls below a threshold. 
    Our Theorem \ref{thm:sv_improvement} shows that if this heuristic measure is in fact low when the state distribution stops changing, then policy improvement can be guaranteed. Because SV-API can make significantly larger target policy updates, while still ensuring distribution shift does not destroy the policy improvement, we frame it as a new kind of solution to a key problem in policy improvement. 

    To future work we leave (1) Improving the convergence condition $\mathcal{C}$ -- our theory indicates what this should correspond to, but those quantities are unmeasurable; (2) Determining other ways to perform ANPS - for example in imagination with a learned model; (3) Applying techniques from the exploration literature to the SV-API's behavioral policy; and (4) Investigating the degree to which off-policy evaluation harms SV-API in practice, and determining ways to reduce this harm.
    
    \bibliographystyle{rlj}	
    \bibliography{main}
    \clearpage
    \newpage

	\beginSupplementaryMaterials
        \section{Proof of Theorem \ref{thm:policy-improvement}}
    \label{sec:proof_thm}
    	\begin{proof}[Proof of Theorem \ref{thm:policy-improvement}]
		Start with the \ref{eq:pdl}, and introduce $q^{\pi}$ through addition and subtraction:
		\begin{align}
			(1-\gamma)(V^{\pi'}(s_{0})-V^{\pi}(s_{0})) & = \mathbb{E}_{s,a \sim d^{{\pi'}}}\ A^{\pi}(s,a) \quad \eqref{eq:pdl}\nonumber \\
           & = \mathbb{E}_{s,a \sim d^{{\pi'}}}\left[ q^{\pi}(s,a) - V^{\pi}(s) + Q^{\pi}(s,a) - q^{\pi}(s,a) \right] \nonumber \\
           & = \mathbb{A}^{\pi}_{\pi'}+ \mathbb{E}_{s,a \sim d^{{\pi'}}}[ Q^{\pi}(s,a)-q^{\pi}(s,a)] \nonumber
		\end{align}
		Subtract $\mathbb{A}^{\pi}_{\pi'}$ from both sides, and take the absolute value:
		\begin{align}
			\left| (1-\gamma)(V^{\pi'}(s_{0})-V^{\pi}(s_{0})) - \mathbb{A}^{\pi}_{\pi'}\right| & = \left| \mathbb{E}_{s,a \sim d^{{\pi'}}}[Q^{\pi}(s,a)-q(s,a)] \right| \nonumber \\
           & \leq\mathbb{E}_{s,a \sim d^{{\pi'}}}|Q^{\pi}(s,a)-q(s,a)| \nonumber              \\
           & = \varepsilon(d^{\pi'},q^{\pi}) \nonumber
		\end{align}
		Dividing by $(1-\gamma)>0$ and noting $|x| \leq a \iff -a \leq x \leq a$ yields the theorem statement.
        \begin{align}
           &-\varepsilon(d^{\pi'},q^{\pi})  \leq (1-\gamma)(V^{\pi'}(s_{0})-V^{\pi}(s_{0})) - \mathbb{A}^{\pi}_{\pi'} \leq \varepsilon(d^{\pi'},q^{\pi})\nonumber \\
            &\frac{1}{1-\gamma} \left( \mathbb{A}^{\pi}_{\pi'} - \varepsilon(d^{\pi'},q^{\pi}) \right) \leq (V^{\pi'}(s_{0})-V^{\pi}(s_{0}))  \leq \frac{1}{1-\gamma}  \left(\mathbb{A}^{\pi}_{\pi'} + \varepsilon(d^{\pi'},q^{\pi}) \right) \nonumber
        \end{align}
	\end{proof}
    \clearpage
    
	\section{Proof of Lemma \ref{lemma:tv_dist}} 
	\label{sec:proof_lemma_tv_dist}
	Given two distributions, $\mu$, and arbitary action-value estimate $q^\pi$, we'd like to $\varepsilon(\mu' q^\pi)$. 
	Dropping dependence on $q^\pi$, and writing the difference:
	\begin{align}
		\varepsilon(\mu') - \varepsilon(\mu) \nonumber
	\end{align}
	Since $\varepsilon(\mu)$ is positive for any distribution, we have
	\begin{align}
		\varepsilon(\mu') - \varepsilon(\mu) &\leq |\varepsilon(\mu') - \varepsilon(\mu)|\\
		&= \left| \mathbb{E}_{s,a \sim \mu'} |Q^\pi(s,a) - q^\pi(s,a)| - \mathbb{E}_{s,a \sim \mu}|Q^\pi(s,a) - q^\pi(s,a)| \right| \\
		&= \left| \sum_{s,a} (\mu' - \mu) |Q^\pi(s,a) - q^\pi(s,a)| \right|\\
		&\leq \left| \sum_{s,a} |\mu' - \mu| |Q^\pi(s,a) - q^\pi(s,a)| \right|\\
		&= \sum_{s,a} |\mu' - \mu| |Q^\pi(s,a) - q^\pi(s,a)|\\
		&\leq \max_{s,a,\pi} \left(|Q^\pi(s,a) - q^\pi(s,a)|\right) \sum_{s,a} |\mu' - \mu|\\
		&\leq \frac{1}{1-\gamma} \sum_{s,a} |\mu' - \mu| \label{eq:lemma_proof_upper_bound_on_diff} \\
		&= \frac{2}{1-\gamma} d_{TV}(\mu',\mu) 
	\end{align}
	Rearranging gives
	\begin{align}
		\varepsilon(\mu') \leq \varepsilon(\mu) +\frac{2}{1-\gamma} d_{TV}(\mu',\mu) 
	\end{align}
    \clearpage

\section{Comparison of Theorem \ref{thm:sv_improvement} to CPI Monotonic Policy Improvement Bound}
	\label{sec:kakade_bound}

    We compare our Theorem \ref{thm:policy-improvement} to the related monotonic policy improvement bound from CPI, which we state below. Note that the following bound uses $\epsilon$ that is distinct from $\varepsilon(\mu, q^\pi)$. CPI uses $\epsilon$ to denote error in the greedy policy selection subroutine. Recall that in our notation, $\varepsilon(q^{\pi_k},\mu)$ measures the explicit mean value estimation error of the critic on its distribution $\mu$. These terms are related, but are not interchangeable.
	
	\begin{theorem}[\cite{agarwal21} Theorem 12.2]
		Assume access to a method which outputs a policy $\pi'$ such that $\mathbb{A}^{\pi_k}_{\pi'} \geq \max_{\tilde{\pi}} \mathbb{A}^{\pi_k}_{\tilde{\pi}} - \epsilon$. Then, the CPI update $\pi_{k+1} = (1-\delta(1-\gamma)) \pi_k + \delta(1-\gamma) \pi'$ yields the following performance bound:
		\begin{align}
			V^{\pi_{k+1}}(s_{0}) - V^{\pi_k}(s_{0}) \geq \frac{1}{1-\gamma}\left( \delta(1-\gamma) [\max_{\tilde{\pi}} \mathbb{A}^{\pi_k}_{\tilde{\pi}} - \epsilon] - 2\delta^2\gamma \right)
		\end{align}
	\end{theorem}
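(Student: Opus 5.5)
The statement is the CPI bound of Theorem 12.2 in \cite{agarwal21}. I would derive it from the \ref{eq:pdl} together with a perturbation argument for the mixture policy. Write $\alpha \doteq \delta(1-\gamma)$ and $\pi_{k+1} = (1-\alpha)\pi_k + \alpha\pi'$.

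\textbf{Linearity.} First note that the expected advantage under the same state distribution is linear in the mixture. Since $\mathbb{E}_{a\sim\pi_k(s)}A^{\pi_k}(s,a)=0$, we have
\begin{align}
\mathbb{E}_{a\sim\pi_{k+1}(s)}A^{\pi_k}(s,a) = \alpha\,\mathbb{E}_{a\sim\pi'(s)}A^{\pi_k}(s,a). \nonumber
\end{align}

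\textbf{Splitting the PDL.} Apply the \ref{eq:pdl} to $\pi_{k+1}$ versus $\pi_k$:
\begin{align}
(1-\gamma)\big(V^{\pi_{k+1}}(s_0)-V^{\pi_k}(s_0)\big)=\mathbb{E}_{s\sim d^{\pi_{k+1}}}\mathbb{E}_{a\sim\pi_{k+1}(s)}A^{\pi_k}(s,a). \nonumber
\end{align}
I would split the state distribution $d^{\pi_{k+1}}$ into $d^{\pi_k}$ plus the correction $(d^{\pi_{k+1}}-d^{\pi_k})$.
\begin{itemize}
\item The $d^{\pi_k}$ part gives $\alpha\,\mathbb{E}_{s\sim d^{\pi_k}}\mathbb{E}_{a\sim\pi'}A^{\pi_k}$. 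This is the main advantage term, to be related to $\alpha[\max_{\tilde\pi}\mathbb{A}^{\pi_k}_{\tilde\pi}-\epsilon]$ using the hypothesis on $\pi'$.
\item For the correction part, bound the inner function $g(s)=\mathbb{E}_{a\sim\pi_{k+1}(s)}A^{\pi_k}(s,a)$ by $\|g\|_\infty\le \alpha\cdot\frac{1}{1-\gamma}$, using that advantages are bounded by $1/(1-\gamma)$. Bound the state-distribution shift by $\|d^{\pi_{k+1}}-d^{\pi_k}\|_1 \le 2\cdot\frac{\gamma\,\alpha}{1-\gamma}$.
\end{itemize}

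\textbf{Distribution shift.} The shift bound comes from the standard resolvent argument: $d^{\pi}=(1-\gamma)e_{s_0}^\top(I-\gamma P^{\pi})^{-1}$, combined with the identity $(I-\gamma P')^{-1}-(I-\gamma P)^{-1}=\gamma(I-\gamma P')^{-1}(P'-P)(I-\gamma P)^{-1}$ and $\max_s\|P^{\pi_{k+1}}(\cdot|s)-P^{\pi_k}(\cdot|s)\|_1\le 2\alpha$. This is the refined form of Lemma \ref{lemma:tv} with the explicit $\gamma$ factor.

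\textbf{Combining.} The correction contributes at least $-\frac{2\gamma\alpha^2}{(1-\gamma)^2}=-2\delta^2\gamma$. Dividing everything by $1-\gamma$ then yields the displayed bound.

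\textbf{Main obstacle.} The hardest step is matching the advantage term to $\mathbb{A}$. As defined in Definition \ref{def:estimated_expected_adv}, $\mathbb{A}^{\pi_k}_{\pi'}$ is taken under $d^{\pi'}$ with the critic $q^{\pi_k}$, whereas the split above produces an expectation under $d^{\pi_k}$. I would interpret $\mathbb{A}$ in the theorem as the on-policy surrogate under $d^{\pi_k}$, as in \cite{agarwal21}, so that the greedy-approximation hypothesis applies directly. If that interpretation is not permitted, I would fall back on Theorem \ref{thm:policy-improvement} plus the same distribution-shift bound; the resulting mismatch would then appear as an extra error term.

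The sharp constant $2\delta^2\gamma$ rather than something looser hinges on using the $\gamma$ factor in the resolvent identity. A cruder bound via Lemma \ref{lemma:tv} alone would give $2\delta^2$, so I would carry the $\gamma$ carefully.
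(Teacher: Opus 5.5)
The paper does not prove this statement; it quotes it from \cite{agarwal21}. Your argument is the standard proof from that source, and it is correct: the \ref{eq:pdl} for the mixture, linearity of the mixed advantage, and the state-marginal shift bound $\|d^{\pi_{k+1}}-d^{\pi_k}\|_1\le 2\gamma\alpha/(1-\gamma)$. The source obtains that shift bound by coupling the mixture with $\pi_k$ step by step; your resolvent identity gives the same constant.

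One refinement. Reading $\mathbb{A}^{\pi_k}_{\pi'}$ as $\mathbb{E}_{s\sim d^{\pi_k}}\mathbb{E}_{a\sim\pi'(s)}A^{\pi_k}(s,a)$ is forced, not optional, so you should drop the fallback. Under Definition~\ref{def:estimated_expected_adv}, even an exact critic gives $\max_{\tilde\pi}\mathbb{A}^{\pi_k}_{\tilde\pi}=(1-\gamma)\bigl(V^{*}(s_0)-V^{\pi_k}(s_0)\bigr)$, and the displayed bound then fails. Take a deterministic chain $s_0\to s_1\to s_2$ with actions ``advance'' and ``reset to $s_0$'', reward $1$ only in the absorbing state $s_2$, $\pi_k$ always resetting and $\pi'$ always advancing. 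The mixture gains only about $\gamma^2\delta^2$, while the right-hand side is $\delta\gamma^2-2\delta^2\gamma/(1-\gamma)$, which is larger for small $\delta$. Under your on-policy reading, every advantage on $d^{\pi_k}$ vanishes in this example, so there is no contradiction. Hence no route through Theorem~\ref{thm:policy-improvement} can recover the stated inequality under the paper's definition; it holds only with the on-policy surrogate of \cite{agarwal21}, as you assumed.
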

    Note that, the distribution shift penalty in the last term, is of $O(\delta^2/(1-\gamma))$, while for SV-API it is of $O(\delta/(1-\gamma)^2)$. This is because, for CPI, the coefficient $\delta(1-\gamma)$ scales down both the expected advantage and distribution shift. This allows CPI to guarantee monotonic policy improvement for small enough $\delta$. In contrast, the bound in Theorem \ref{thm:sv_improvement} is due to an unconstrained policy update, and incurs no multiplier of $\delta$ on the improvement $\mathbb{A}_{\pi_{k+1}}^{\pi_k}$.

    \clearpage
	
    \section{Detailed Atari Convergence Threshold}
	\begin{figure}[h]
		\centering
		\includegraphics[width=1\linewidth]{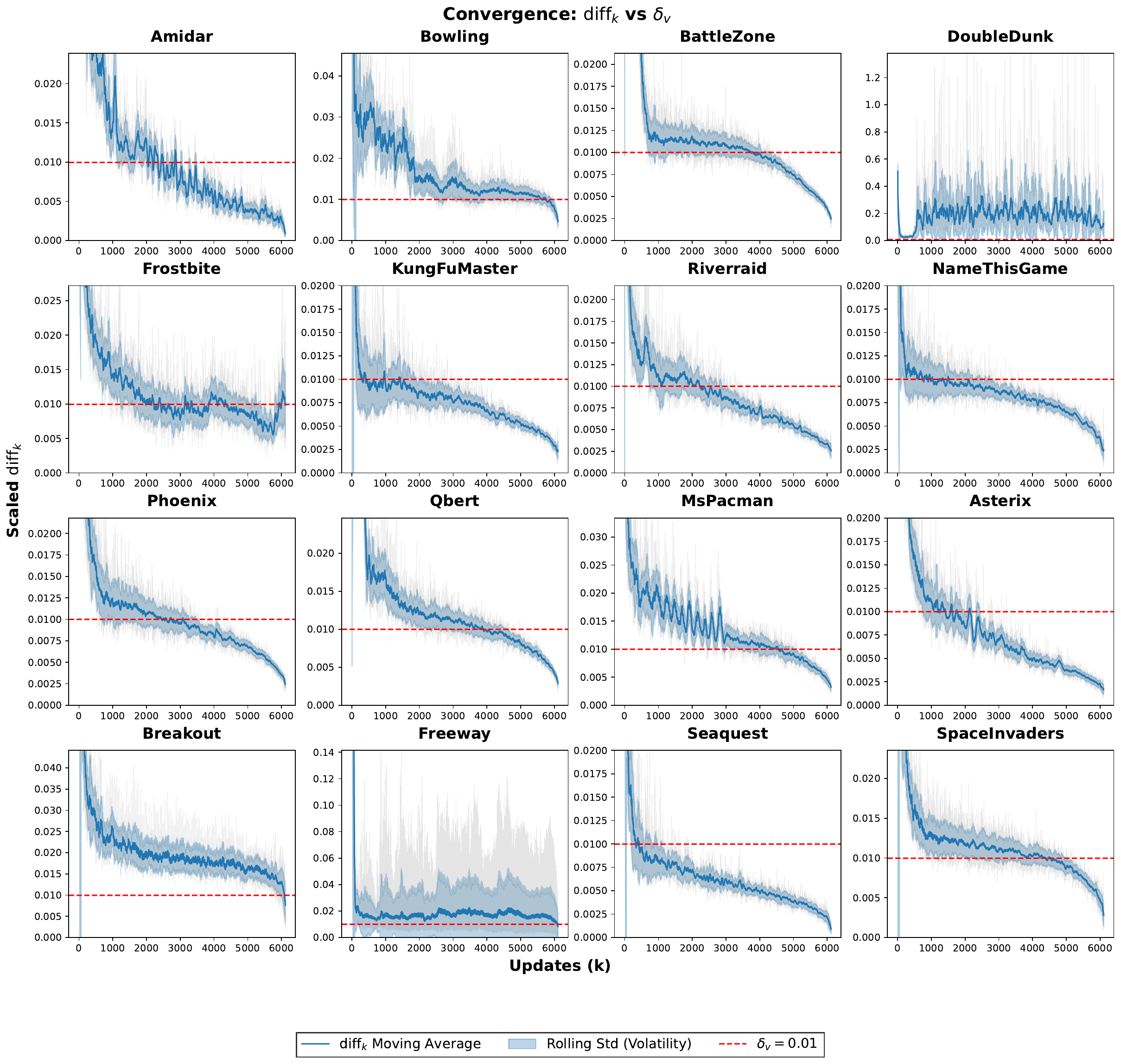}
		\caption{Scaled value diff and convergence threshold $\delta_{v}$ for SV-PPO on all 16 Atari games.}
		\label{fig:scaled_v_diff_all_atari}
	\end{figure}
    \clearpage
    
	\section{Detailed Atari Learning Curves}

	\begin{figure}[h]
		\centering
		\includegraphics[width=1\linewidth]{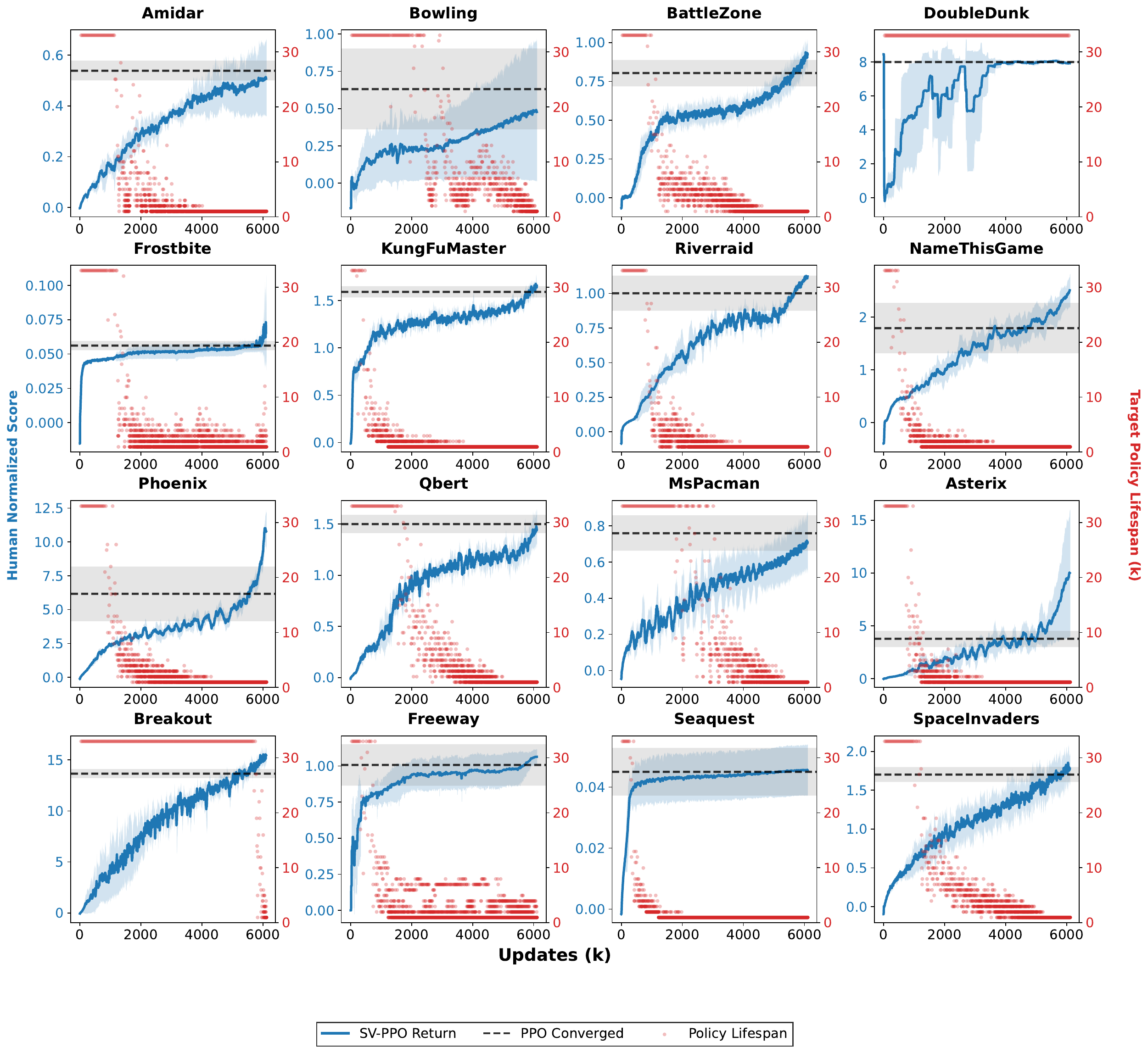}
		\caption{Average score and target policy lifespan for SV-PPO on all 16 Atari games.}
		\label{fig:rets_over_time_all_atari}
	\end{figure}
\clearpage

    \section{Learning Curves: Atari}
        \begin{figure}[htbp]
        \centering
        \includegraphics[width=\textwidth]{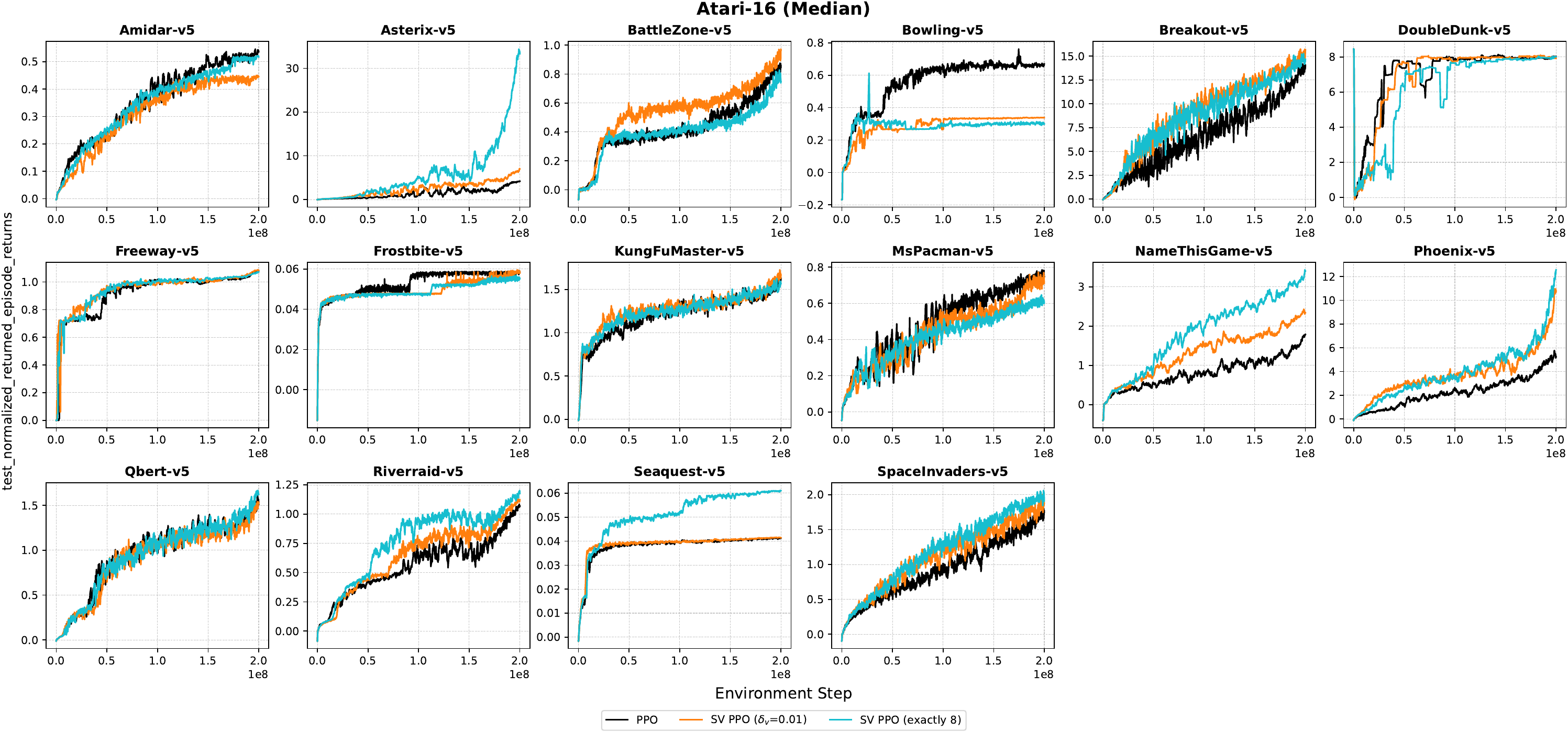}
        \caption{Learning Curves for PPO, Dynamic SV-PPO, and Static SV-PPO for Atari (5 seeds, median shown
        0.}
        \label{fig:sv_ppo_exactly8}
    \end{figure}
    \clearpage
    
	\section{Four Rooms Experiment}  
    \label{sec:supp_4_rooms}
    Four Rooms \citep{journals/ai/SuttonPS99} is a grid world with fixed starting state (top left room) and goal state (bottom right room)
    The action is the intended direction of movement, and is executed noisily, with a $20\%$ chance of success. 
    Hugging the walls mitigates the negative effect of an erroneous move.
    The optimal policy, shown in Figure (\ref{fig:fourrooms_optimal}), visits all four rooms: generally going through the top room, but occasionally going through the bottom room if brought there via noise.
    
	We use 10,000 rounds of value iteration to solve for the optimal policy on Four Rooms. We then compute the optimal actions, placing equal weight on ties, and compute the associated discounted stationary distribution.
	\label{sec:appendix_four_rooms}
	\begin{figure}[htbp]
		\centering
		\begin{subfigure}
			[b]{0.43\textwidth}
			\centering
			\includegraphics[width=\textwidth]{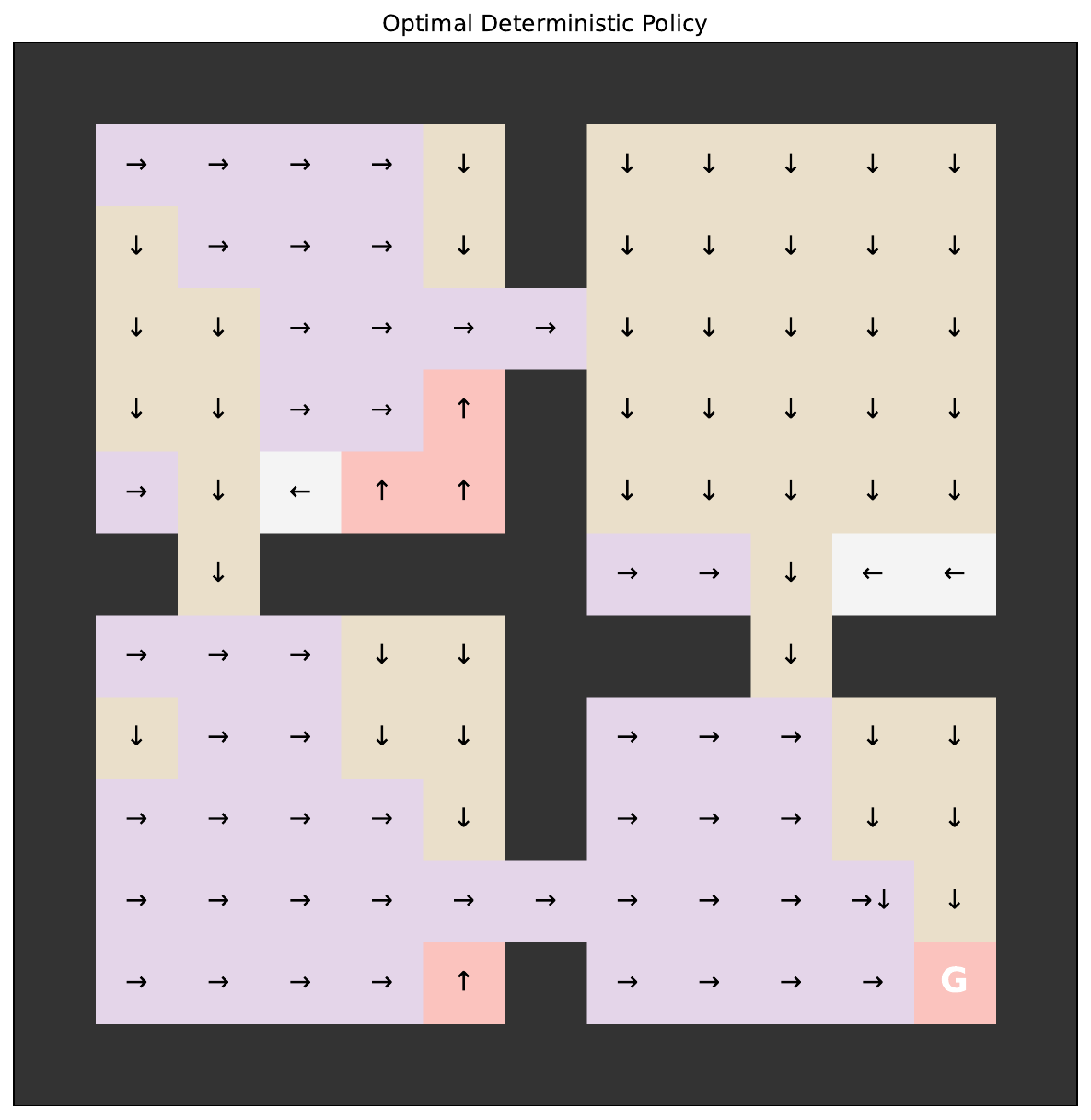}
			\caption{Optimal Policy}
			\label{fig:fourrooms_optimal}
		\end{subfigure}
		\hfill 
		\begin{subfigure}
			[b]{0.51\textwidth}
			\centering
			\includegraphics[width=\textwidth]{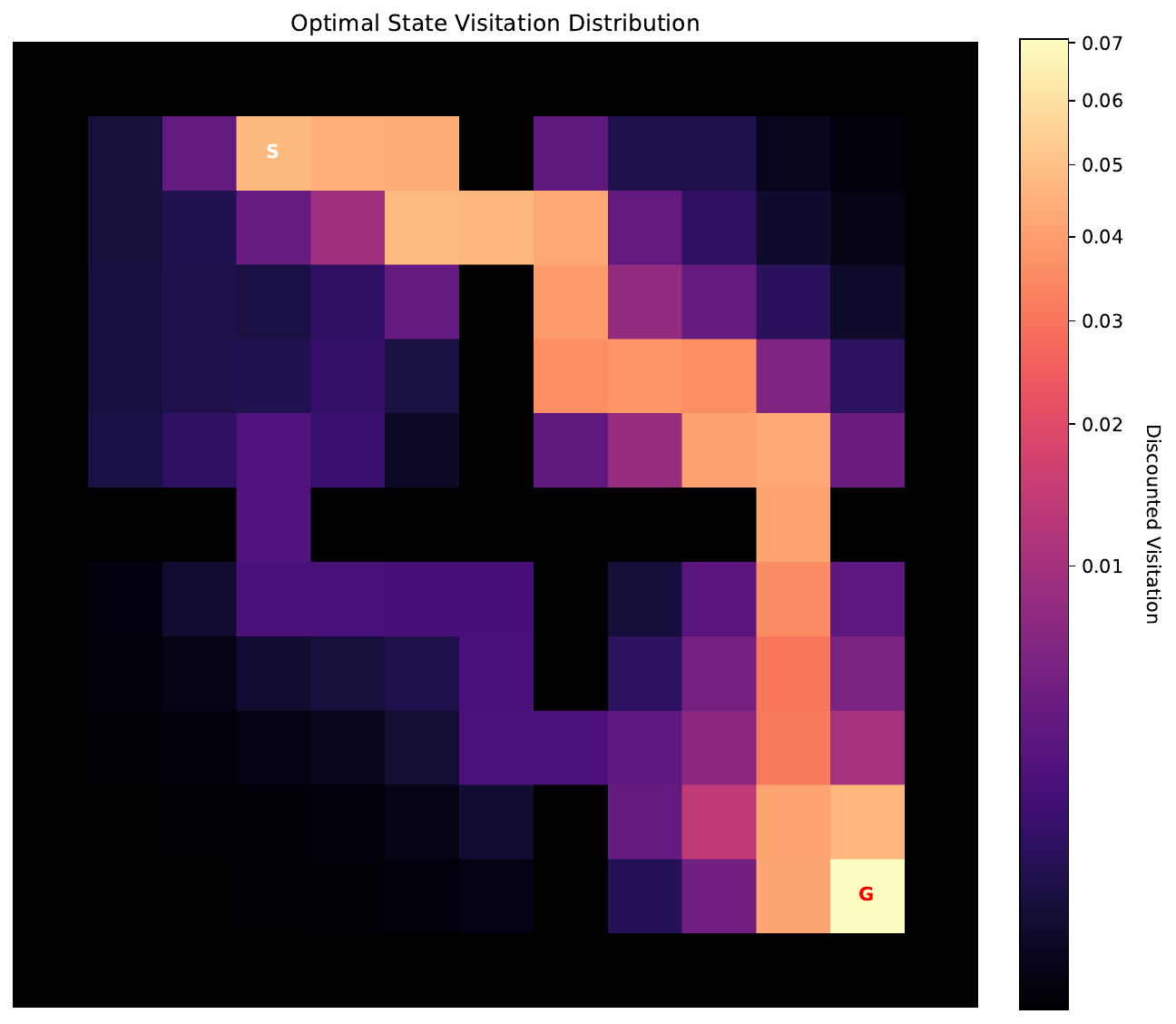}
			\caption{Associated State Visitation Distribution}
			\label{fig:fourroomsOptVisit}
		\end{subfigure}
		\caption{Optimal Policies and Associated State Visitation Distribution}
		\label{fig:four_rooms_optimal_total}
	\end{figure}

    \subsection{Experiment Details}
	We run an implementation of PPO due to \cite{purejaxrl}, using a convolutional actor critic network, due to \cite{pqn}). 
    The observation space is visual (the entire grid, with the agent's position and wall marked).
    SV-PPO uses dynamic thresholding, and is discussed in detail in the next section. 
    We run for 195 total rounds, each corresponding to an update to the behavioral policy. 
    Based on the strictness of the convergence criterion, SV-PPO updates the target policy 40 times.
    Figure \ref{fig:four_rooms_curves} visualizes the convergence logic (corresponding to $\mathcal{C}$ in Algorithm \ref{alg:sv_api}): If the quantity shown in blue on the rightmost chart falls below the dotted line for four consecutive iterations, an update occurs.
    \subsection{Results}
    
	In terms of performance, SV-PPO converges to the optimal policy.
	PPO has unstable convergence to a local optimum that always attempts to go through the top rooms even if noise has pushed the agent closer to the bottom-left room, while the optimal policy is to continue along the bottom route.
	Of note is the sudden collapse in the performance of PPO in iteration $k=80$. 
	This is due to mis-generalization of the value network, sometimes called value churn \citep{tang2024churn}. 
	Around iteration $k =50$, PPO's CNN critic over-estimates the value along the left wall.
	By $k=60$, the PPO policy network learns to avoid the bottom room. 
	Insufficient sampling and poor generalization leads to catastrophic forgetting occurring at iteration $k=90$.
    This is indicated by a massive degradation in the true value function at these states.
    The ``recovery'' in value that occurs for PPO is actually to avoid these states, which is a sub-optimal policy.
    Detailed visualizations of the learning dynamics are in Figures \eqref{fig:ppo_detailed} and \eqref{fig:sv_ppo_detailed}. 

	By evaluating a fixed policy for longer, SV-PPO mitigated value overestimation.
    For example, it experienced less-overestimation along the left wall, since it was fitting a lower value policy in the top right room.
	Lower value estimates led to more consistent exploration of the problematic room -- instead of just seeking the left wall, as PPO did.
    The end result is that SV-PPO learns a strong policy in the bottom left room, while PPO's remains weak, even at the end of learning.

	\begin{figure}[htbp]
		\centering
		\includegraphics[width=\linewidth]{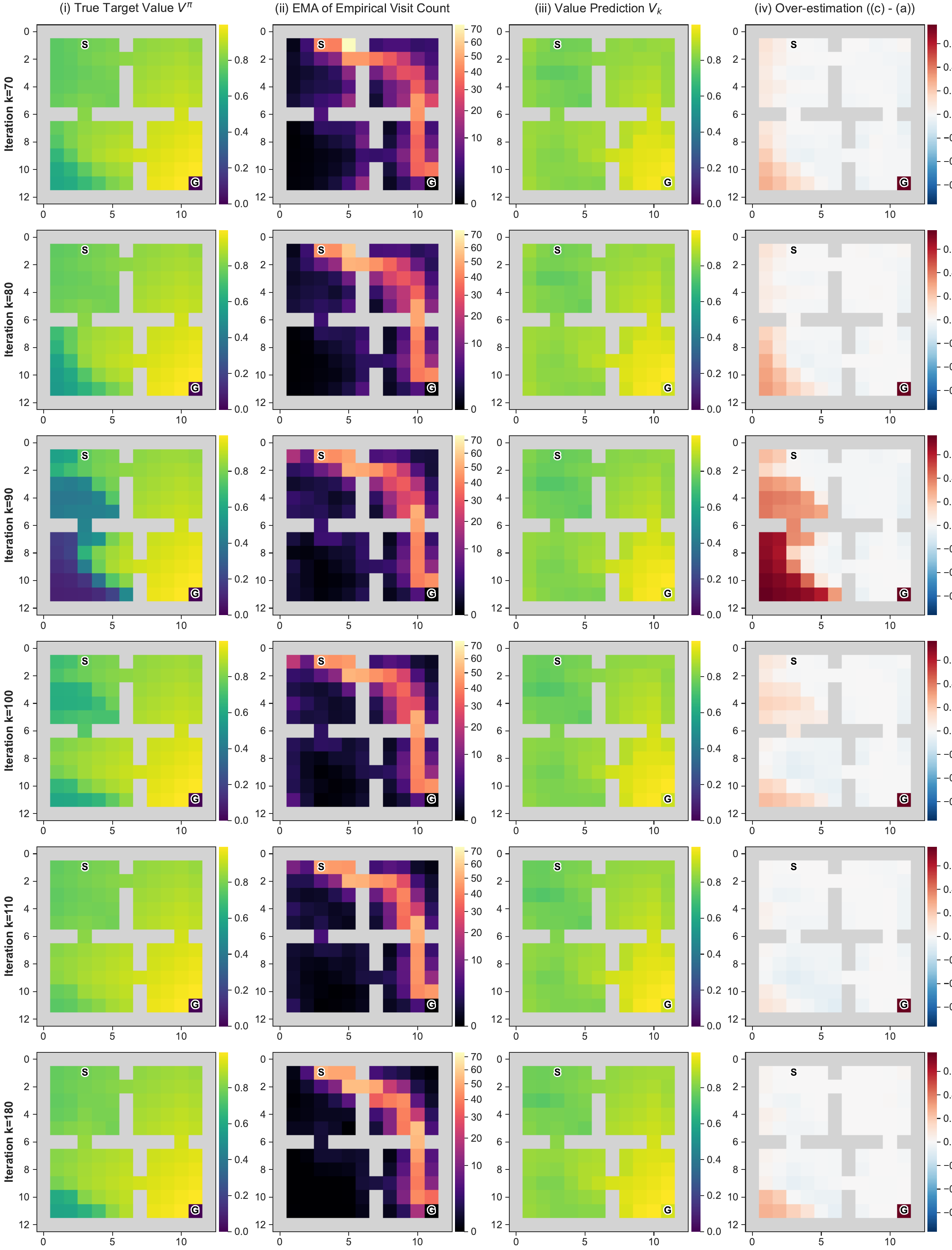}
		\caption{Detailed Examination of PPO's Learning on Four Rooms, highlighting the catastrophic forgetting
		at iteration 90, and over exploration of the bottom left corner due to value over -estimation at Iteration
		100. This leads the agent to avoid the room altogether.}
		\label{fig:ppo_detailed}
	\end{figure}

	\begin{figure}[htbp]
		\centering
		\includegraphics[width=\linewidth]{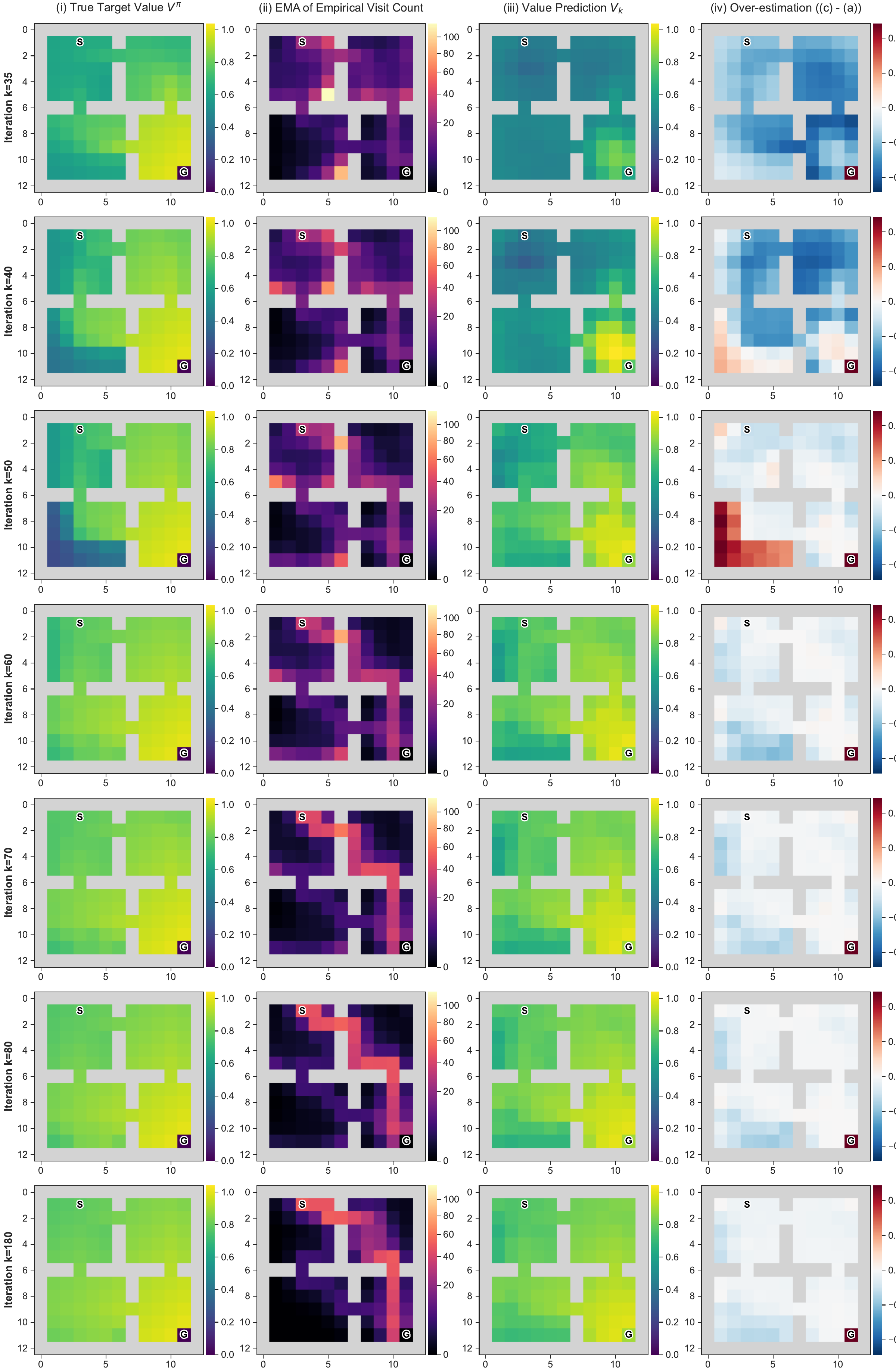}
		\caption{Detailed Examination of SV-PPO's Learning on Four Rooms, highlighting iterations which saw value
		over-estimation the bottom left corner, but less widespread than PPO, and more quickly resolved by exploring
		the room. The final policy goes through the bottom room just like the optimal policy in Figure (\ref{fig:four_rooms_optimal_total})}
		\label{fig:sv_ppo_detailed}
	\end{figure}
	\clearpage

    \section{Static SV-PPO Atari}
    	\begin{table}[htbp]
		\caption{\textbf{Static SV-PPO on Atari.} By strictly gathering data for 9 rounds before every target policy update, SV-PPO operates on 11\% of the target updates of standard PPO as shown in \textbf{(b)
		Updates}. 
        \textbf{Column (a)} indicates the final converged return relative to PPO. \textbf{Column (b)} indicates the fraction of rounds $k$ where SV-PPO updates the target policy. 
        \textbf{Column (c)} shows the size of the target policy jump as a multiplier of PPO's, demonstrating that SV-PPO executes substantially larger target updates. 
        \textbf{Column (d)} shows the same quantity for the behavioral policy remains 
        constrained.}
		\label{tab:sv_atari_results_8}
		\begin{center}
			\resizebox{\textwidth}{!}{
			\begin{tabular}{lcccc}
				\toprule Environment             & \shortstack{(a)\\Score\\(\% of PPO)} & \shortstack{(b)\\Updates\\(\% of PPO)} & \shortstack{(c)\\KL($\pi_{k}|| \pi_{k+1}$)\\($\times$ PPO)} & \shortstack{(d)\\KL($\beta_{k}|| \beta_{k+1}$)\\($\times$ PPO)} \\
				\midrule Asterix                 & \textbf{787 $\pm$ 285}                        & 11                                     & 4.5x                                                        & 0.7x                                                          \\
				Frostbite                        & 358 $\pm$ 526                        & 11                                     & 7.6x                                                        & 1.1x                                                          \\
				Phoenix                          & \textbf{186 $\pm$ 22}                         & 11                                     & 4.5x                                                        & 0.8x                                                          \\
				NameThisGame                     & \textbf{184 $\pm$ 20}                         & 11                                     & 5.3x                                                        & 1.0x                                                          \\
				Seaquest                         & \textbf{125 $\pm$ 17}                         & 11                                     & 6.9x                                                        & 1.1x                                                          \\
				Riverraid                        & \textbf{120 $\pm$ 9}                          & 11                                     & 7.3x                                                        & 1.0x                                                          \\
				SpaceInvaders                    & \textbf{113 $\pm$ 6}                          & 11                                     & 4.7x                                                        & 0.9x                                                          \\
				Breakout                         & \textbf{107 $\pm$ 3}                          & 11                                     & 6.2x                                                        & 0.9x                                                          \\
				Qbert                            & 101 $\pm$ 14                         & 11                                     & 7.5x                                                        & 0.9x                                                          \\
				DoubleDunk                       & 100 $\pm$ 1                          & 11                                     & 11.3x                                                       & 2.0x                                                          \\
				BattleZone                       & 100 $\pm$ 11                         & 11                                     & 6.6x                                                        & 0.9x                                                          \\
				\hdashline \noalign{\vskip 0.5ex} 
				Freeway                          & 99 $\pm$ 14                          & 11                                     & 4.9x                                                        & 1.0x                                                          \\
				KungFuMaster                     & 98 $\pm$ 8                           & 11                                     & 6.1x                                                        & 0.9x                                                          \\
				Amidar                           & 92 $\pm$ 10                          & 11                                     & 5.7x                                                        & 0.9x                                                          \\
				MsPacman                         & 90 $\pm$ 14                          & 11                                     & 6.7x                                                        & 0.9x                                                          \\
				Bowling                          & 70 $\pm$ 32                          & 11                                     & 4.9x                                                        & 0.6x                                                          \\
				\bottomrule
			\end{tabular}
			}
		\end{center}
	\end{table}
    \clearpage
    
    \section{Dynamic SV-PPO Atari}

	\begin{table}[htbp]
		\caption{\textbf{Dynamic SV-PPO on Atari 2600 environments.} \textbf{Column (a)} indicates the final
		converged return relative to PPO, with bold indicating significantly different performance. \textbf{Column
		(b)} indicates the fraction of rounds $k$ where SV-PPO updates the target policy. \textbf{Columns (c)
		and (d)} show statistics about the convergence value ($\text{diff}_{k}/\overline{y}_{T}$ in Algorithm
		\ref{alg:convergence}), normlaized by the threshold $\delta_{v}$, such that a value $\leq 1$ implies stability.\textbf{Column
		(e)} shows the size of the target policy jump as a multiplier of PPO's, demonstrating that SV-PPO executes
		substantially larger target updates. \textbf{Column (f)} shows the same quantity for the behavioral policy.}
		\label{tab:sv_atari_results}
		\begin{center}
			\resizebox{\textwidth}{!}{
			\begin{tabular}{lcccccc}
				\toprule Environment             & \shortstack{(a)\\ \textbf{Score}\\ (\% of PPO)} & \shortstack{(b)\\ \textbf{Updates}\\(\% of PPO)} & \shortstack{(c)\\ {\boldmath \textbf{$\text{diff}_{k}/\delta_{v}$}} \\(Median)} & \shortstack{(d)\\ {\boldmath \textbf{Std. $\text{diff}_{k}$}} \\(\% of $\delta_{v}$)} & \shortstack{(e)\\ {\boldmath \textbf{KL($\pi_{k}|| \pi_{k+1}$)}} \\(x PPO)} & \shortstack{(f)\\ {\boldmath \textbf{KL($\beta_{k}|| \beta_{k+1}$)}} \\(x PPO)} \\
				\midrule Asterix                 & \textbf{256 $\pm$ 151}                          & 74                                               & 56                                                                              & 27                                                                                    & 0.7x                                                                        & 0.5x                                                                          \\
				Phoenix                          & \textbf{170 $\pm$ 16}                           & 61                                               & 85                                                                              & 25                                                                                    & 0.9x                                                                        & 0.5x                                                                          \\
				NameThisGame                     & \textbf{137 $\pm$ 16}                           & 75                                               & 75                                                                              & 21                                                                                    & 1.3x                                                                        & 1.1x                                                                          \\
				Frostbite                        & 119 $\pm$ 40                                    & 55                                               & 83                                                                              & 37                                                                                    & 1.3x                                                                        & 0.6x                                                                          \\
				BattleZone                       & \textbf{114 $\pm$ 10}                           & 47                                               & 96                                                                              & 27                                                                                    & 1.8x                                                                        & 0.8x                                                                          \\
				Riverraid                        & \textbf{111 $\pm$ 1}                            & 67                                               & 73                                                                              & 27                                                                                    & 1.2x                                                                        & 0.7x                                                                          \\
				Breakout                         & \textbf{111 $\pm$ 5}                            & 4                                                & 187                                                                             & 90                                                                                    & 9.6x                                                                        & 0.7x                                                                          \\
				Freeway                          & 105 $\pm$ 5                                     & 38                                               & 106                                                                             & 189                                                                                   & 2.3x                                                                        & 0.9x                                                                          \\
				SpaceInvaders                    & 105 $\pm$ 12                                    & 41                                               & 100                                                                             & 24                                                                                    & 1.4x                                                                        & 0.6x                                                                          \\
				KungFuMaster                     & 103 $\pm$ 3                                     & 79                                               & 67                                                                              & 21                                                                                    & 1.2x                                                                        & 1.0x                                                                          \\
				Seaquest                         & 101 $\pm$ 18                                    & 85                                               & 47                                                                              & 18                                                                                    & 1.2x                                                                        & 1.0x                                                                          \\
				\hdashline \noalign{\vskip 0.5ex} 
				DoubleDunk                       & 99 $\pm$ 1                                      & 3                                                & 1645                                                                            & 3299                                                                                  & 27.0x                                                                       & 0.9x                                                                          \\
				Qbert                            & 96 $\pm$ 11                                     & 40                                               & 100                                                                             & 28                                                                                    & 1.9x                                                                        & 0.6x                                                                          \\
				Amidar                           & 94 $\pm$ 27                                     & 54                                               & 83                                                                              & 36                                                                                    & 1.3x                                                                        & 0.6x                                                                          \\
				MsPacman                         & 92 $\pm$ 19                                     & 27                                               & 115                                                                             & 39                                                                                    & 2.8x                                                                        & 0.4x                                                                          \\
				Bowling                          & 77 $\pm$ 74                                     & 40                                               & 105                                                                             & 2865                                                                                  & 2.4x                                                                        & 0.7x                                                                          \\
				\bottomrule
			\end{tabular}
			}
		\end{center}
	\end{table}
    \clearpage
    
    \section{Dynamic SV-PPO Brax}
    \begin{table}[htbp]
    \caption{\textbf{Dynamic SV-PPO on Continuous Control} 
        \textbf{Column (a)} indicates the final converged return relative to PPO.
        \textbf{Column(b)} indicates the fraction of rounds $k$ where SV PPO updates the target policy. 
        \textbf{Columns (c) and (d)} show statistics about the convergence value ($\text{diff}_{k}/\overline{y}_{T}$ in Algorithm
		\ref{alg:convergence}), normalized by the threshold $\delta_{v}$, such that a value $\leq 1$ implies stability.
        \textbf{Column(e)} indicates value loss (MSBE) at convergence as a multiplier of PPO's. 
        \textbf{Column (f)} shows the size of the target policy jump as a multiplier of PPO's, demonstrating that SV-PPO executes substantially larger target updates. 
        \textbf{Column (g)} shows the same quantity for the behavioral policy.
        \textbf{Column (h)} shows mean policy entropy of PPO at convergence.}
    \label{tab:sv_brax_results_dynamic}
    \begin{center}
        \resizebox{\textwidth}{!}{
            \begin{tabular}{lcccccccc}
            \toprule
            Environment & \shortstack{(a)\\ \textbf{Score}\\ (\% of PPO)} & \shortstack{(b)\\ \textbf{Updates}\\(\% of PPO)} & \shortstack{(c)\\ {\boldmath \textbf{$\text{diff}_k /\delta_v$}} \\(Median)} & \shortstack{(d)\\ {\boldmath \textbf{Std. $\text{diff}_k$}} \\(\% of $\delta_v$)} & \shortstack{(e)\\ \textbf{Value Loss} \\ (x PPO)} & \shortstack{(f)\\ {\boldmath \textbf{KL($\pi_k || \pi_{k+1}$)}} \\(x PPO)} & \shortstack{(g)\\ {\boldmath \textbf{KL($\beta_k || \pi_{k+1}$)}} \\(x PPO)} & \shortstack{(h)\\ \boldmath$H(\pi_k)$ \\(PPO)} \\
            \midrule
            inverted\_pendulum & 137 $\pm$ 13 & 97.11 & 45 & 72 & 2.12x & 1.0x & 1.0x & -2.39 \\
            humanoidstandup & 110 $\pm$ 19 & 97.12 & 46 & 20 & 1.03x & 1.0x & 1.0x & 36.89 \\
            walker2d & 108 $\pm$ 31 & 97.23 & 108 & 37 & 0.93x & 1.0x & 1.0x & 1.82 \\
            inverted\_double\_pendulum & 103 $\pm$ 21 & 97.24 & 35 & 28 & 0.97x & 1.2x & 1.2x & -0.43 \\
            swimmer & 102 $\pm$ 3 & 96.91 & 46 & 24 & 1.08x & 1.2x & 1.1x & 2.53 \\
            halfcheetah & 100 $\pm$ 3 & 96.92 & 36 & 14 & 0.99x & 1.1x & 1.0x & 5.24 \\
            fast & 100 $\pm$ 0 & 97.22 & 9 & 6 & 0.99x & 1.1x & 1.0x & -0.23 \\
            pusher & 100 $\pm$ 16 & 97.03 & 63 & 46 & 0.83x & 1.3x & 1.2x & -14.62 \\
            hopper & 99 $\pm$ 4 & 97.20 & 51 & 24 & 0.80x & 1.1x & 1.1x & 3.95 \\
            ant & 99 $\pm$ 6 & 93.82 & 67 & 38 & 1.49x & 1.0x & 1.0x & -0.07 \\
            humanoid & 98 $\pm$ 2 & 97.18 & 48 & 21 & 1.08x & 1.0x & 1.0x & 28.95 \\
            reacher & 91 $\pm$ 21 & 97.20 & 83 & 37 & 0.89x & 1.0x & 1.0x & -7.16 \\
            \bottomrule
            \end{tabular}
        }
    \end{center}
\end{table}
    \clearpage
    \section{Static SV-PPO Brax}

\begin{table}[htbp]
\caption{\textbf{Static SV-PPO on Continuous Control.} 
        \textbf{Column (a)} indicates the final converged return relative to PPO.
        \textbf{Column(b)} indicates the fraction of rounds $k$ where SV PPO updates the target policy. 
        \textbf{Column(c)} indicates value loss (MSBE) at convergence as a multiplier of PPO's. 
        \textbf{Column (d)} shows the size of the target policy jump as a multiplier of PPO's, demonstrating that SV-PPO executes substantially larger target updates. 
        \textbf{Column (e)} shows the same quantity for the behavioral policy.
        \textbf{Column (f)} shows mean policy entropy of PPO at convergence.}
\label{tab:sv_brax_results_static}
\resizebox{\textwidth}{!}{\begin{tabular}{lcccccc}
\toprule
Environment & \shortstack{(a)\\Score \\ (\% of PPO)} & \shortstack{(b)\\Updates \\(\% of PPO)} & \shortstack{(c)\\ \textbf{Value Loss} \\(x PPO)} & \shortstack{(d)\\KL($\pi_k || \pi_{k+1}$)\\ (x PPO)} & \shortstack{(e)\\KL($\beta_k || \pi_{k+1}$)\\ (x PPO)} & \shortstack{(f)\\ \boldmath$H(\pi_k)$ \\(PPO)} \\
\midrule
inverted\_pendulum & 106 $\pm$ 49 & 12.49 & 2.17x & 5.0x & 0.5x & -2.39 \\
halfcheetah & 105 $\pm$ 4 & 12.49 & 0.84x & 7.9x & 1.3x & 5.24 \\
swimmer & 105 $\pm$ 15 & 12.49 & 0.90x & 12.7x & 1.1x & 2.53 \\
humanoidstandup & 104 $\pm$ 11 & 12.49 & 0.86x & 15.5x & 1.2x & 36.89 \\
fast & 100 $\pm$ 0 & 12.49 & 1.06x & 40.1x & 1.8x & -0.23 \\
humanoid & 98 $\pm$ 5 & 12.49 & 0.98x & 9.7x & 1.2x & 28.95 \\
inverted\_double\_pendulum & 97 $\pm$ 20 & 12.49 & 0.59x & 16.1x & 1.7x & -0.43 \\
ant & 93 $\pm$ 8 & 12.49 & 1.23x & 7.9x & 1.1x & -0.07 \\
reacher & 86 $\pm$ 11 & 12.49 & 0.84x & 11.7x & 1.0x & -7.16 \\
hopper & 84 $\pm$ 13 & 12.49 & 0.65x & 20.6x & 1.9x & 3.95 \\
pusher & 78 $\pm$ 16 & 12.49 & 0.91x & 72.0x & 7.0x & -14.62 \\
walker2d & 73 $\pm$ 19 & 12.49 & 1.01x & 17.8x & 1.2x & 1.82 \\
\bottomrule
\end{tabular}}
\end{table}

    \clearpage

    \section{Game-By-Game Comparison: Atari}
    \begin{figure}[htbp]
        \centering
        \includegraphics[width=\textwidth]{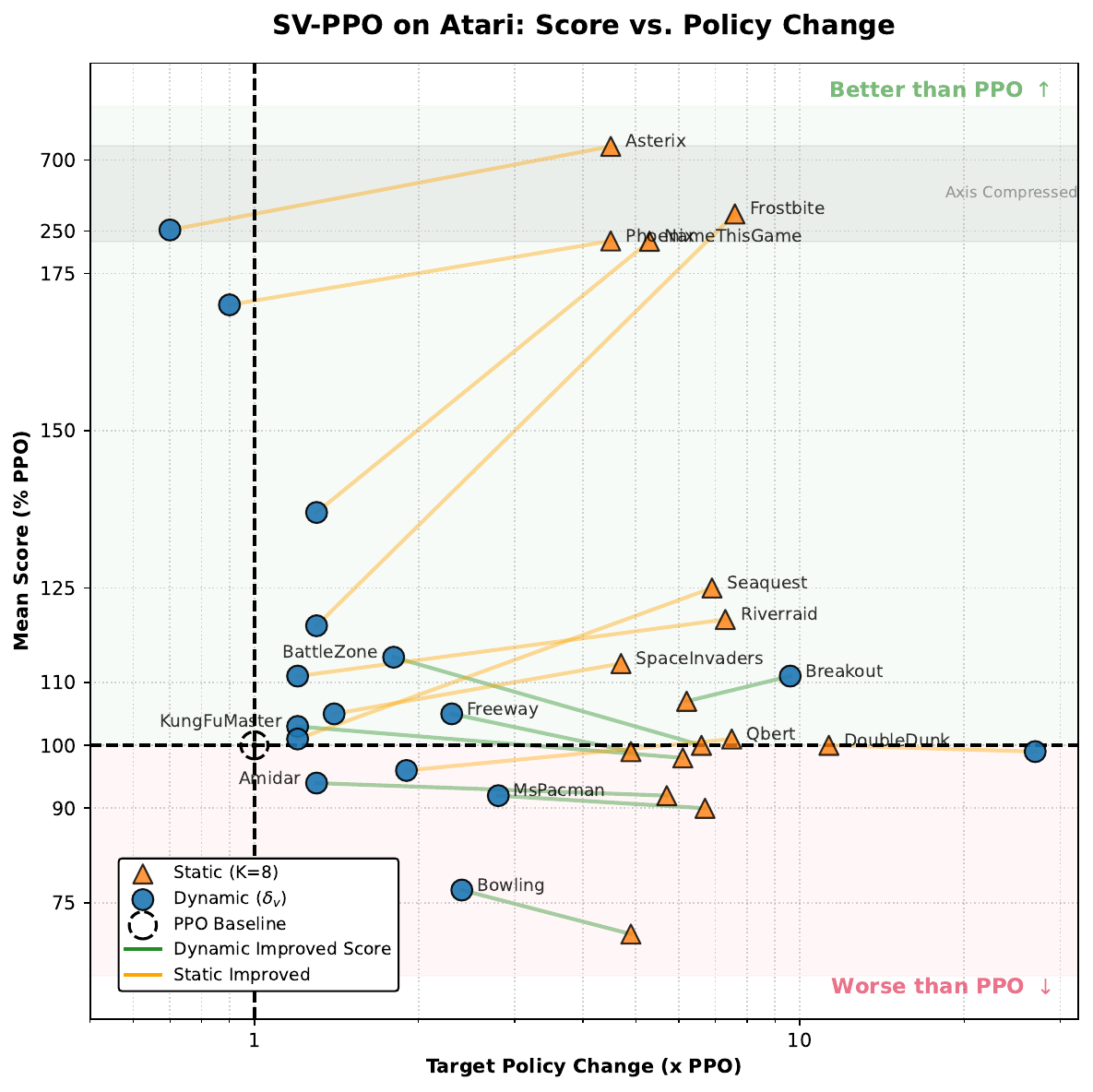}
        \caption{Game by Game comparison of Static and Dynamic SV-PPO on Atari. Each game is placed on the plot twice: a blue circle denotes the dynamic variant, and a triangle denotes the static variant. A colored line connects the same game for the two variants. The vertical axis is converged performance (normalized by PPO's) and the horizontal axis is the average change to the target policy, as measured by KL-divergence. Thus, if the line slopes ``upward'' for a certain game, this indicates the static variant increased performance. Moreover, the origin indicates PPO by construction.}
        \label{fig:comparison_atari}
    \end{figure}
    \clearpage
    
    \section{Game-By-Game Comparison: Brax}
    \begin{figure}[htbp]
        \centering
        \includegraphics[width=\textwidth]{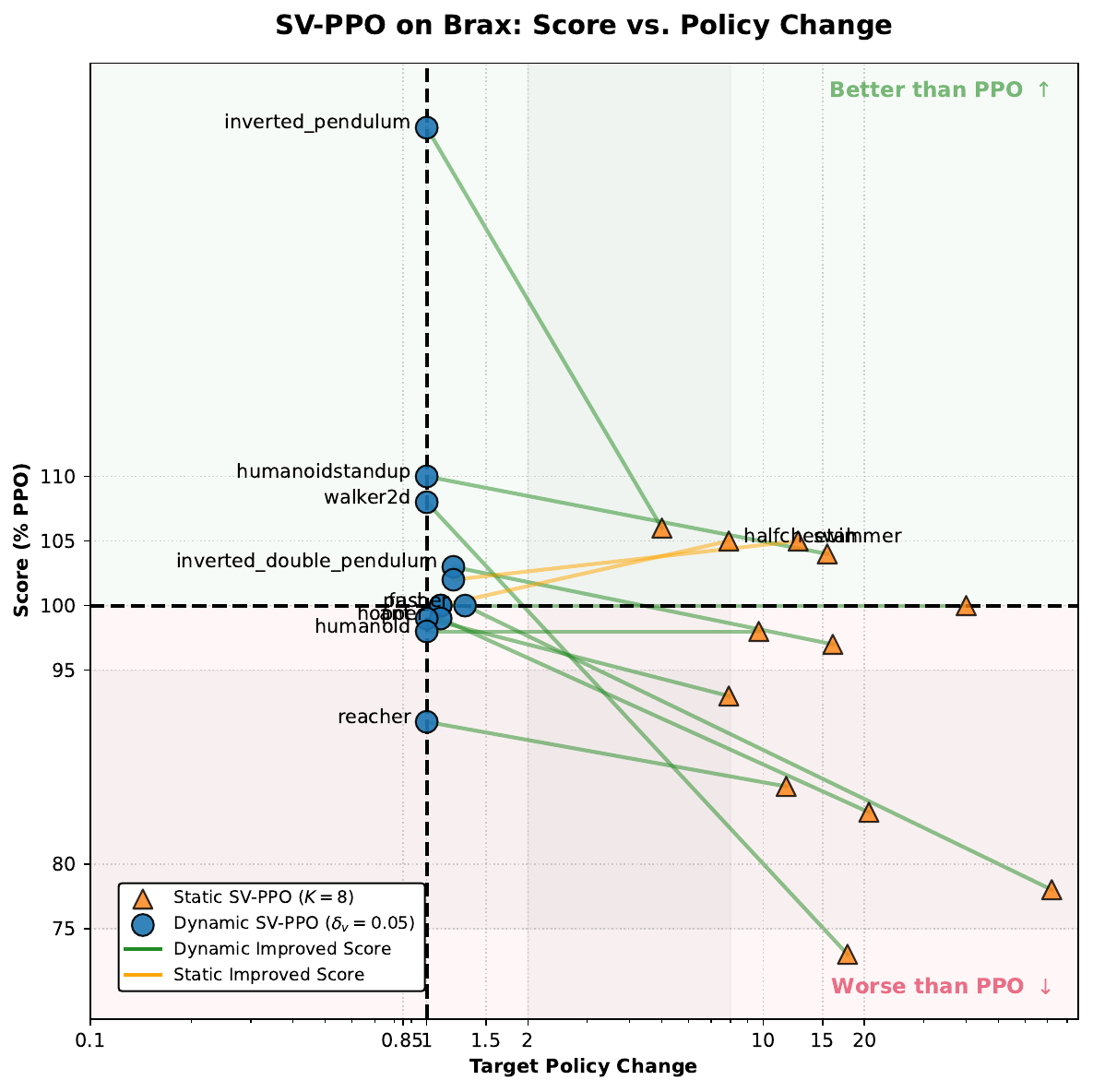}
        \caption{Game by Game comparison of Static and Dynamic SV-PPO on Brax. Each game is placed on the plot twice: a blue circle denotes the dynamic variant, and a triangle denotes the static variant. A colored line connects the same game for the two variants. The vertical axis is converged performance (normalized by PPO's) and the horizontal axis is the average change to the target policy, as measured by KL-divergence. Thus, if the line slopes ``upward'' for a certain game, this indicates the static variant increased performance. Moreover, the origin indicates PPO by construction.}
        \label{fig:comparison_brax}
    \end{figure}
    \clearpage
    \section{Learning Curves: Brax}
    
    \begin{figure}[htbp]
        \centering
        \includegraphics[width=\textwidth]{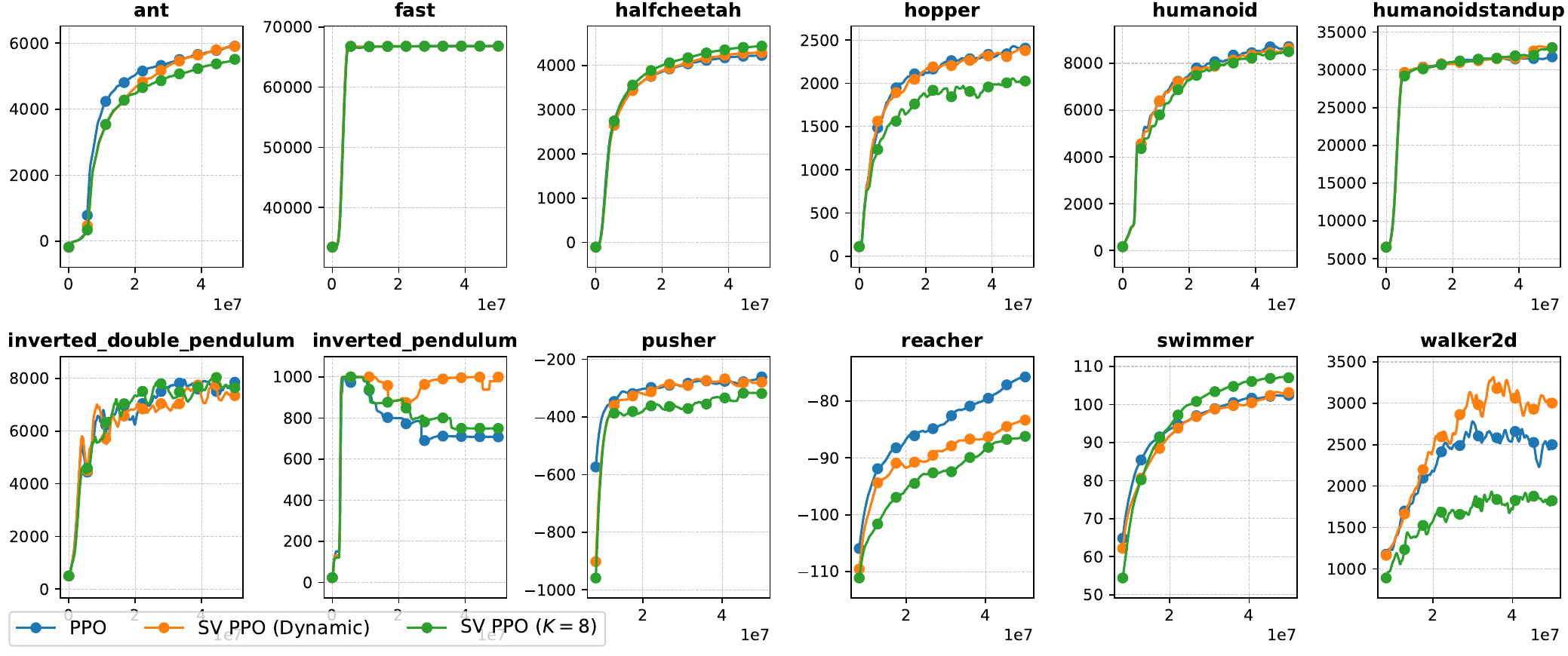}
        \caption{Learning Curves for PPO, Dynamic SV-PPO, and Static SV-PPO for Brax.}
        \label{fig:brax_sv_ppo_curves}
    \end{figure}
    \clearpage

    \section{SV-PPO Brax Hyperparameters}

    \begin{table}[htbp]
    \caption{\textbf{Hyperparameters for Brax Continuous Control.} Shared parameters (due to the PureJAXRL repository \cite{purejaxrl}) are applied across all baselines and variants. SV-PPO specific parameters apply only to the dynamic and static gated algorithms.}
    \label{tab:hyperparameters}
    \begin{center}
        \begin{tabular}{llc}
            \toprule
            \textbf{Category} & \textbf{Hyperparameter} & \textbf{Value} \\
            \midrule
            \multicolumn{3}{c}{\textbf{Shared PPO Hyperparameters}} \\
            \midrule
            \multirow{5}{*}{\textit{Environment \& Sampling}} 
            & Total Timesteps & $5 \times 10^7$ \\
            & Number of Environments & 2048 \\
            & Steps per Environment ($T$) & 10 \\
            & Discount Factor ($\gamma$) & 0.99 \\
            & GAE Parameter ($\lambda$) & 0.95 \\
            \midrule
            \multirow{10}{*}{\textit{Optimization (Network)}} 
            & Learning Rate (Initial) & $3 \times 10^{-4}$ \\
            & Learning Rate (Final) & $1 \times 10^{-5}$ \\
            & LR Schedule & Linear \\
            & Optimization Epochs & 4 \\
            & Minibatch Size & 1024 \\
            & Activation Function & Tanh \\
            & Max Gradient Norm & 1.0 \\
            & Initial Log Std & 0.5 \\
            & Normalize Observations & True \\
            & Normalize Rewards & True \\
            & Warmup Steps & 1000 \\
            \midrule
            \multirow{4}{*}{\textit{Loss Coefficients}} 
            & PPO Clip Range ($\epsilon$) & 0.2 \\
            & Value Clip Range & 0.2 \\
            & Value Coefficient ($c_2$) & 0.5 \\
            & Entropy Coefficient ($\alpha_{ent}$) & 0.001 \\
            
            \midrule
            \multicolumn{3}{c}{\textbf{Dynamic SV-PPO Specific Hyperparameters}} \\
            \midrule
            \multirow{6}{*}{\textit{Stability Gating}} 
            & Value Diff Threshold ($\delta_v$) & 0.05 \\
            & Min Stable Iters ($K_{min}$) & 2 \\
            & Max Policy Delay ($K_{max}$) & 8 \\
            & IS Ratio Bounds ($\bar{c}, \bar{\rho}$) & $[0.0, 100.0]$ \\
            & $K$ Decay Fraction & 0.05 \\
            \midrule
            \multicolumn{3}{c}{\textbf{Static SV-PPO Specific Hyperparameters}} \\
            \midrule
            & Min Stable Iters ($K_{min}$) & 8 \\
            & Max Policy Delay ($K_{max}$) & 8 \\
            & IS Ratio Bounds ($\bar{c}, \bar{\rho}$) & $[0.0, 100.0]$ \\
            \bottomrule
        \end{tabular}
    \end{center}
\end{table}
\clearpage

\section{SV-PPO Atari Implementation and Hyperparameters}
\begin{table}[htbp]
    \caption{\textbf{Hyperparameters for Atari-16.} The PPO implementation is due to PureJAXRL, with most hyperparamters due to the original paper \cite{schulman2017proximalpolicyoptimizationalgorithms}. However, following Museli's \cite{museli} strong implementation of PPO \textit{with sticky actions}, we made several changes. First, separate IMPALA networks were used for actor and critic.  The discount factor was raised to $\gamma = 0.995$ and the maximum learning rate to $3 \times 10^{-4}$. The minibatch size was also increased to 1,024. These changes improved performance on all tested games, and were shared by both runs. The SV PPO stability gating hyperparamters were tuned on the MinAtar environments \cite{young19minatar}, which can be run much more quickly than experiments on Atari. SV-PPO specific parameters apply only to the dynamic and static gated algorithms.}
    \label{tab:atari_hyperparameters}
    \begin{center}
        \begin{tabular}{llc}
            \toprule
            \textbf{Category} & \textbf{Hyperparameter} & \textbf{Value} \\
            \midrule
            \multicolumn{3}{c}{\textbf{Shared PPO Hyperparameters}} \\
            \midrule
            \multirow{10}{*}{\textit{Environment \& Sampling}} 
            & Total Timesteps & $200M$ \\
            & Number of Environments & 64 \\
            & Steps per Environment ($T$) & 128 \\
            & Discount Factor ($\gamma$) & 0.995 \\
            & GAE Parameter ($\lambda$) & 0.95 \\
            & Frame Skip & 4 \\
            & No-op Max & 30 \\
            & Repeat Action Probability & 0.25 \\
            & Episodic Life & True \\
            & Reward Clipping & False \\
            \midrule
            \multirow{7}{*}{\textit{Optimization (Network)}} 
            & Learning Rate (Initial) & $3 \times 10^{-4}$ \\
            & Learning Rate (Final) & $1 \times 10^{-6}$ \\
            & LR Schedule & Linear \\
            & Optimization Epochs & 4 \\
            & Minibatch Size & 1024 \\
            & Max Gradient Norm & 1.0 \\
            & Layer Normalization & True \\
            \midrule
            \multirow{4}{*}{\textit{Loss Coefficients}} 
            & PPO Clip Range ($\epsilon$) & 0.1 \\
            & Value Clip Range & 0.2 \\
            & Value Coefficient ($c_2$) & 0.25 \\
            & Entropy Coefficient ($\alpha_{ent}$) & 0.01 \\
            
            \midrule
            \multicolumn{3}{c}{\textbf{Dynamic SV-PPO Specific Hyperparameters}} \\
            \midrule
            \multirow{5}{*}{\textit{Stability Gating}} 
            & Value Diff Threshold ($\delta_v$) & 0.01 \\
            & $K_{min}$ Decay Fraction & 0.2 \\
            & Min Stable Iters ($K_{min}$) & 9 decays to 1 \\
            & Max Policy Delay ($K_{max}$) & 33 \\
            & IS Ratio Bounds ($\bar{c}, \bar{\rho}$) & $[0.0, 5.0]$ \\
            \midrule
            \multicolumn{3}{c}{\textbf{Static SV-PPO Specific Hyperparameters}} \\
            \midrule
            \multirow{3}{*}{\textit{Stability Gating}}
            & Min Stable Iters ($K_{min}$) & 9 \\
            & Max Policy Delay ($K_{max}$) & 9 \\
            & IS Ratio Bounds ($\bar{c}, \bar{\rho}$) & $[0.0, 5.0]$ \\
            \bottomrule
        \end{tabular}
    \end{center}
\end{table}
\newpage

\section{SV-PPO}
    \label{appendix:sv_ppo}
	Let $\pi_{\theta}$ denote the policy actor network, with $\theta_{k}$ parameterizing
	the current behavioral policy at round $k$, and $\overline{\theta}_{k}$ denoting the target policy
	parameters. The PPO policy optimization objective stays the same, except that the ratio $r_{k}(\theta)$
	is the ratio of behavioral policies.

	\begin{align}
		J^{CLIP}(\theta) = \hat{\mathbb{E}}_{t}\left[ \min \left( \frac{\beta_{\theta}(a_{t}| s_{t})}{\beta_{\theta_{k}}(a_{t}| s_{t})}\hat{A}_{t}, \text{clip}\left(\frac{\beta_{\theta}(a_{t}| s_{t})}{\beta_{\theta_{k}}(a_{t}| s_{t})}, 1 - \epsilon, 1 + \epsilon\right) \hat{A}_{t}\right) \right]
	\end{align}
	In this way, the behavioral policy is allowed to drift further and further away from the target policy in
	each round. Once the convergence criteria is met, the update is just $\overline{\theta}\gets \theta_{k+1}$.

	The critic training process must ensure that the value function's fixed point corresponds to $\pi_{\overline{\theta}_k}$.
	Let the value network be $v_{\phi}$. We use the clipped importance sampling method V-Trace \cite{espeholt2018impalascalabledistributeddeeprl},
	which can be written in recursive terms:
	\begin{align}
		\rho_{t}   & = \min \left( \bar{\rho}, \frac{\pi_{\overline{\theta}_k}(a_{t}| x_{t})}{\pi_{\theta_k}(a_{t}| x_{t})}\right), \quad c_{i}= \min \left( \lambda, \frac{\pi_{\overline{\theta}_k}(a_{i}| x_{i})}{\pi_{\theta_k}(a_{i}| x_{i})}\right) \label{eq:is_weights} \\
		\delta_{t} & = r_{t}+ \gamma v_{\phi}(s_{t+1}) - v_{\phi}(s_{t}) \nonumber                                                                                                                                                                                              \\
		y_{t}      & = v_{\phi}(s_{t}) + \rho_{t}\delta_{t}+ \gamma c_{t}(y_{t+1}- v_{\phi}(s_{t+1})) \nonumber                                                                                                                                                                 \\
		           & \text{with }\quad y_{T+1}= v_{\phi}(s_{T+1}) \label{eq:vtrace}
	\end{align}
	Since $c_{t}$ is at most $\lambda$, this results in TD($\lambda$) when on-policy ($\theta_{k}$ =
	$\overline{\theta}_{k}$). To eliminate bias due to off policy sampling, $\overline{\rho}$ must be set to
	infinity. Clipping with $\overline{\rho}< \infty$ stabilizes learning at the cost of adding some bias. We
	set $\overline{\rho}$ to 5. Since the advantage estimates are also off-policy, we modify the
	generalized advantage estimation \citep{gae} used by PPO by incorporating the importance sampling weights
	from Equation \eqref{eq:is_weights}, which are originally due to Retrace($\lambda$) \citep{DBLP:retrace}.
	\begin{align}
		\hat{A}_{t} & = \delta_{t}+ \gamma c_{t}\hat{A}_{t+1}, \quad \hat{A}_{T+1}= 0 \label{eq:retrace_gae}
	\end{align}
	Finally, we normalize the estimates $\hat{A}_{t}$ by dividing by the batch standard deviation, but we
	do not center them, to avoid flipping the sign.

\subsubsection*{Acknowledgments}
\label{sec:ack}
This material is based upon work supported by ARO grant \#W911NF2210251. Any opinions, findings, and conclusions or recommendations expressed in this material are those of the authors and do not necessarily reflect the views of the ARO.
   
\end{document}